\documentclass[a4paper,11pt]{article}

\input{macros.tex}

\title{
\toptitlebar
{{\center\baselineskip 18pt
                      {\Large\bf Non-Euclidean Gradient Descent Operates\\ at the Edge of Stability}\footnote{This work is accepted to International Conference on Machine Learning (ICML 2026) for an oral presentation.}}
} 
\bottomtitlebar}
\date{}
\author[1]{Rustem Islamov}
\author[2]{Michael Crawshaw}
\author[3]{Jeremy Cohen}
\author[3]{Robert Gower}
\affil[1]{University of Basel, Switzerland}
\affil[2]{George Mason University, USA}
\affil[3]{Flatiron Institute, USA}

\begin{document}

\maketitle

\begin{abstract}
The Edge of Stability (\algname{EoS}) is a phenomenon where the sharpness (largest eigenvalue) of the Hessian approaches and then hovers near the stability threshold $2/\eta$ during gradient descent (\algname{GD}) with step size $\eta$. Despite (apparently) violating classical smoothness assumptions, \algname{EoS} has been widely observed in deep learning, but its theoretical foundations remain incomplete. We provide an interpretation of \algname{EoS} through the lens of Directional Smoothness~\cite{mishkin2024directional}. This interpretation naturally extends to non-Euclidean norms, which we use to define generalized sharpness under an arbitrary norm.  Our generalized sharpness measure includes previously studied vanilla \algname{GD} and preconditioned \algname{GD} as special cases, as well as methods for which \algname{EoS} has not been studied, such as  $\ell_{\infty}$-descent, \algname{Block CD}, \algname{Spectral GD}, and their normalized versions. Through experiments on neural networks, we show that non-Euclidean \algname{GD} with our generalized sharpness also exhibits progressive sharpening followed by oscillations around or above the threshold $2/\eta$. Practically, our framework provides a geometry-aware spectral diagnostic that can be applied across a broad class of non-Euclidean gradient methods.
\end{abstract}

\section{Introduction}

In supervised settings, training machine learning models is posed as empirical risk minimization $\min_{\bw\in\R^d} \cL(\bw),$ where $\bw\in\R^d$ are the neural network's parameters, and $\cL(\bw)$ is the full-batch loss, which we assume is bounded below by $\cL^* > -\infty$. In deep learning, $\cL$ is typically nonconvex and highly structured \citep{li2018visualizing,kim2024exploring}. Nevertheless, first-order methods such as \algname{SGD} and its adaptive variants \citep{duchi2011adaptive,kingma2014adam} are the workhorses of practice and scale effectively to large models, despite a limited theoretical understanding of their success.

Full-batch gradient descent (\algname{GD}) serves as the canonical proxy for analyzing gradient-based training. Classical results for $L$-smooth convex objectives guarantee descent for step sizes up to $2/L$. In contrast, recent empirical work reveals a characteristic two-phase behavior when deep networks are trained with \algname{GD}. In the initial phase, called the progressive sharpening phase, the loss $\cL(\bw_t)$ decreases monotonically while the sharpness $S(\bw_t)\coloneqq \lambda_{\max}(\nabla^2\cL(\bw_t))$ grows. This is followed by the edge-of-stability (\algname{EoS}) phase, where the loss behaves non-monotonically yet decreases over longer horizons, while the sharpness hovers near the threshold $2/\eta$ \citep{cohen2021edgeofstability}.

The \algname{EoS} phenomenon has been found to extend beyond vanilla \algname{GD}.  \citet{cohen2022adaptive} showed that adaptive preconditioning methods such as  \algname{AdaGrad} and \algname{Adam} exhibit an \algname{EoS} characterization that revolves around the top eigenvalue of the \emph{preconditioned} Hessian, while \citet{long2024sharpness} showed that \algname{SAM} obeys a certain \algname{EoS} characterization as well.  Despite these advances, the question of how \algname{EoS} generalizes to other optimizers remains underexplored.  Here we investigate how the \algname{EoS} phenomenon carries over to a broad family of optimization algorithms: that of non-Euclidean gradient descent with respect to an arbitrary norm.
\begin{definition}\label{def:gradmethod}
    For a norm $\norm{\cdot}$   and a step-size $\eta > 0$, the associated non-Euclidean \algname{GD} method is given by the minimization of the regularized linearization around the current point $\bw_t$:
    \begin{align}
      \bw_{t+1} & \in \argmin{\by}\dotprod{\nabla \cL(\bw_t), \by-\bw_t} + \frac{1}{2\eta}\|\by -\bw_t\|^2  \nonumber \\
      &= \bw_t -  \eta\| \nabla \cL(\bw_t)\|_*(\nabla \cL(\bw_t))_*,
      \label{eq:gradmethod}
    \end{align}
    where the \emph{dual norm} $\| \nabla \cL(\bw_t) \|_*$ and \emph{dual gradient} $(\nabla \cL(\bw_t))_*$ are defined as:
    \begin{align}\label{def:dualnorm_dualgrad}
&\| \nabla \cL(\bw_t)\|_* \coloneqq \max_{\|\by\|= 1}\dotprod{ \nabla \cL(\bw_t), \by}, \\
    &(\nabla \cL(\bw_t) )_* \coloneqq \argmax_{\|\by\|= 1}\dotprod{ \nabla \cL(\bw_t), \by}. \nonumber
\end{align}
    We let $\bd_t \coloneqq \| \nabla \cL(\bw_t)\|_*(\nabla \cL(\bw_t))_*$ denote the update ``direction'' (i.e. the update without the step-size $\eta$).
\end{definition}
This formulation reduces to vanilla \algname{GD} when the norm $\norm{\cdot}$ is taken to be the $\ell_2$ norm.  It also subsumes methods not previously studied by prior work on \algname{EoS} such as $\ell_{\infty}$-descent (for $\norm{\cdot}=\ell_{\infty}$) and \algname{Spectral GD} (for $\norm{\cdot}=\norm{\cdot}_{2\rightarrow2}$) \citep{carlson15stochastic} (which underlies the popular \algname{Muon} method \citep{jordan2024muon}), as well as \algname{Block CD} \citep{nesterov2012efficiency} and other coordinate descent variants.

Sometimes, the dual norm is omitted from the update \eqref{eq:gradmethod}.
We refer to the resulting algorithm as \emph{normalized} non-Euclidean \algname{GD}\footnote{For the $\ell_\infty$ norm, \Cref{def:gradmethod} gives $\ell_\infty$-descent, while \Cref{def:normalized_gradmethod} gives normalized $\ell_\infty$-descent, also known as SignGD.}:
\begin{definition}\label{def:normalized_gradmethod}
    For a norm $\norm{\cdot}$ (not necessarily the $\ell_2$ norm) and a step-size $\eta > 0$, the associated \emph{normalized} non-Euclidean \algname{GD} method is given by 
    \begin{equation}
        \bw_{t+1} = \bw_t - \eta(\nabla \cL(\bw_t))_*,  \label{eq:normalized-gradmethod}
    \end{equation}
    where the dual gradient $(\nabla\cL(\bw_t))_*$ is defined in \eqref{def:dualnorm_dualgrad}. In this case, $\bd_t \coloneqq (\nabla \cL(\bw_t))_*.$
\end{definition}
When $\norm{\cdot}$ is the $\ell_\infty$ norm, this formulation recovers normalized $\ell_\infty$-descent (also known as \algname{SignGD} \citep{bernstein2018signsgd}), and when $\norm{\cdot}$ is the spectral norm $\norm{\cdot}_{2\rightarrow2}$, it recovers normalized \algname{Spectral GD} \citep{boyd2004convex} (used in modern optimizers such as \algname{Muon} \citep{jordan2024muon} and \algname{Scion} \citep{pethick2025training}). Our main contributions are:
\begin{enumerate}

\item We identify that an intermediary quantity called directional smoothness $D^{\norm{\cdot}}(\bw,\by)$ \citep{mishkin2024directional} can be used to study the dynamics of sharpness and the \algname{EoS}. Directional smoothness is the average curvature between two consecutive iterates.
\item Through a simple identity, we show that if the loss decreases, then 
 directional smoothness \emph{must} be less than $2/\eta$. If the loss oscillates, then directional smoothness \emph{must} oscillate around $2/\eta$.

\item Extending directional smoothness beyond Euclidean norm, we define a generalized sharpness $S^{\norm{\cdot}}$ of \algname{GD} under any norm $\norm{\cdot}$. In the special cases of Euclidean and preconditioned \algname{GD}, this measure recovers previously established notions of sharpness.

\item Across MLPs, CNNs, and Transformer architectures, we observe that $S^{\norm{\cdot}}$ sharpens and hovers near, and sometimes slightly above, the stability threshold $2/\eta$, providing empirical evidence for geometry-aware \algname{EoS} behavior.

\item To shed light on the mechanism underlying this behavior, we analyze the dynamics of non-Euclidean \algname{GD} on quadratic objectives.

\end{enumerate}

\subsection{Related Works}

The \algname{EoS} phenomenon was first documented for vanilla \algname{GD} with step-size $\eta$, where the sharpness (the maximum Hessian eigenvalue) was observed to hover near the stability threshold $2/\eta$ \citep{cohen2021edgeofstability,wu2018sgd}. This initial work also extended empirical observations to \algname{GD} with momentum and provided intuition for \algname{EoS} on quadratic objectives. Building on this, \citet{arora2022understanding} gave a mathematical analysis of the implicit regularization that arises at \algname{EoS}, showing that in non-smooth loss landscapes the updates of normalized \algname{GD} follow a deterministic flow constrained to the manifold of minimal loss. A subsequent study by \citet{song2023trajectory} demonstrated empirically that \algname{GD} trajectories align with a universal bifurcation diagram during \algname{EoS}, while \citet{damian2022self} identified self-stabilization as the key mechanism: a cubic term in the Taylor expansion along the top Hessian eigenvector introduces negative feedback that drives sharpness back toward $2/\eta$ whenever it exceeds the threshold. Beyond the stability plateau, \citet{ghosh2025learning} analyzed loss oscillations in deep linear networks, demonstrating that they happen in a low-dimensional subspace whose dimension depends on the step-size $\eta$. Finally, several works connect \algname{EoS} with the catapult mechanism observed in training with a large learning rate \citep{lewkowycz2020large, zhu2024catapults, kalra2023phase}.

The phenomenon has also been studied for preconditioned and adaptive methods. \citet{cohen2022adaptive} showed that the sharpness of the preconditioned Hessian stabilizes at the same threshold for methods such as \algname{AdaGrad} and \algname{RMSprop}. Meanwhile, \citet{long2024sharpness} conducted a stability analysis of \algname{SAM} \citep{foret2020sharpness} on quadratics, empirically showing that \algname{SAM} operates at the edge of stability. Extensions beyond full-batch \algname{GD} include \citet{lee2023new}, who analyzed the interaction between batch-gradient distributions and loss geometry to extend \algname{EoS} to \algname{SGD}, and \citet{andreyev2024edge}, who proposed an alternative stochastic counterpart of \algname{EoS}.

Despite this progress, most prior studies have focused on a narrow family of algorithms (e.g., vanilla \algname{GD}, preconditioned \algname{GD}, or \algname{SAM}), leaving a fundamental gap in our understanding of spectral properties and raising the question of whether these insights extend to substantially different optimization methods such as \algname{Muon} \citep{jordan2024muon}, \algname{Scion} \citep{pethick2025training}, and \algname{SignGD} \citep{bernstein2018signsgd}. 
Here, we take a step toward a unifying definition of sharpness for non-Euclidean gradient methods and empirically demonstrate \algname{EoS} behavior across several representative geometries, including $\ell_\infty$-descent, \algname{Block CD}, \algname{Spectral GD}, and their normalized variants.

\section{Progressive Sharpening and Directional Smoothness}

Classical descent guarantees for \algname{GD} rely on global $L$-smoothness, but such bounds are often too pessimistic for neural networks \citep{zhang2019gradient,alimisis2025why}. Instead, we adopt a local, trajectory-aware notion of directional smoothness \citep{mishkin2024directional}.

\begin{definition}
Let $\Delta\cL_t \coloneqq \cL(\bw_{t+1}) -\cL(\bw_t)$ be the change of the function value.
    We call a function $D^{\norm{\cdot}}(\bw_{t},\bw_{t+1})$ a valid \emph{directional smoothness} at iteration $t$ if
\begin{align}
 \Delta\cL_t &\le 
 \langle \nabla \cL(\bw_t),\bw_{t+1}-\bw_t  \rangle + \frac{D^{\norm{\cdot}}(\bw_t,\bw_{t+1})\|\bw_{t+1} - \bw_t\|^2}{2}, \label{eq:quadratic-bound}
\end{align}
where $D^{\norm{\cdot}}(\bw_{t}, \bw_{t+1})$ depends only on the behavior of the loss $\cL$ along the chord $[\bw_t, \bw_{t+1}]$.
\end{definition}
\citet{mishkin2024directional} provide several examples of the directional smoothness. Here, we choose the tightest one
\begin{equation}\label{eq:dirsmooth-opt}
D^{\norm{\cdot}}(\bw,\by)
\coloneqq
\frac{\cL(\by)-\cL(\bw)-\langle \nabla \cL(\bw), \by-\bw\rangle}{\frac{1}{2}\|\by-\bw\|^2},
\end{equation}
which makes \eqref{eq:quadratic-bound} hold with equality. Although this quantity might not be positive (and thus falls outside the positivity requirements of \citet{mishkin2024directional}), positivity is not required in the following presentation. Substituting one step of non-Euclidean \algname{GD} into \eqref{eq:quadratic-bound} yields
\begin{align} 
    \Delta\cL_t &= - \eta \dotprod{\nabla \cL(\bw_{t}),
    \bd_t} + \eta^2 \frac{D^{\norm{\cdot}}(\bw_{t}, \bw_{t+1}) }{2}\|\bd_t\|^2\nonumber \\
     &= - \eta \left( 1-\frac{\eta}{2}D^{\norm{\cdot}}(\bw_{t}, \bw_{t+1})\right)\| \nabla \cL(\bw_t)\|_*^2, \label{eq:quadratic-bound-gd}
\end{align}
where we used that~\eqref{def:dualnorm_dualgrad} implies $$\dotprod{\nabla \cL(\bw_{t}), \bd_t} = \| \nabla \cL(\bw_t)\|_*^2.$$
Consequently, 
whenever $\|\nabla \cL(\bw_t)\|_* > 0$, the loss decreases \emph{iff}
\begin{equation}\label{eq:D-less-than-2-eta}
\Delta\cL_t \le 0
\quad\Longleftrightarrow\quad
D^{\norm{\cdot}}(\bw_t,\bw_{t+1}) \le \frac{2}{\eta}.
\end{equation}

The equivalence in~\eqref{eq:D-less-than-2-eta} justifies the progressive sharpening of the directional smoothness. Note that in deep learning experiments where \algname{EoS}~is observed, the gradient norm remains non-zero \citep{defazio2023optimal,defazio2025gradients}, see the Gradient Norm panel in Figure~\ref{fig:gd_all}. Therefore, according to~\eqref{eq:D-less-than-2-eta}, if the loss initially decreases and then starts to oscillate, as is often observed in training, then directional smoothness must start below $2/\eta$ and then increase (sharpen) up to $2/\eta$, and then oscillate around $2/\eta$. Indeed, see the Directional Smoothness panel in Figure~\ref{fig:gd_all}, where we can see that the directional smoothness progressively sharpens up to $\nicefrac{2}{\eta}$. Thus, almost by definition, directional 
smoothness exhibits the sharpening and \algname{EoS} phase.

\begin{figure*}
    \centering
    \begin{tabular}{c}
         \includegraphics[width=1\linewidth]{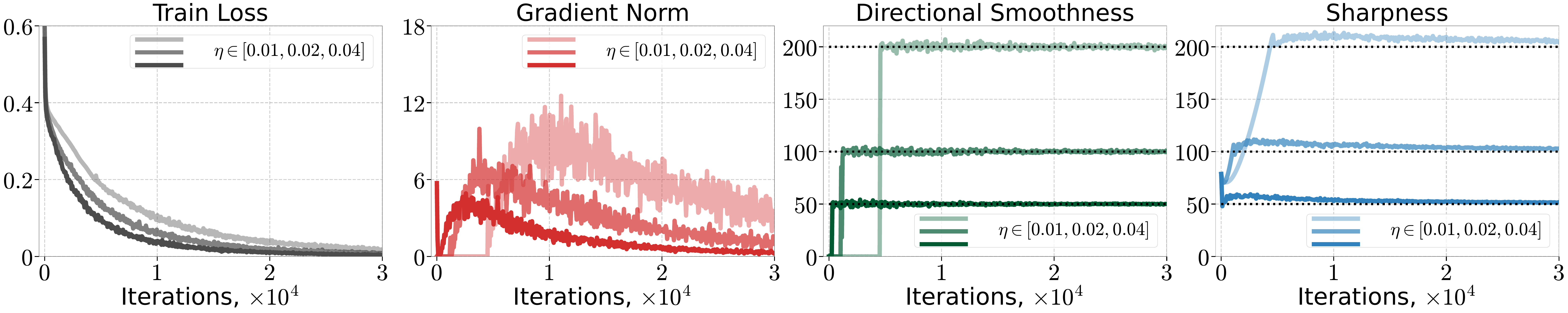}\\
         \includegraphics[width=1\linewidth]{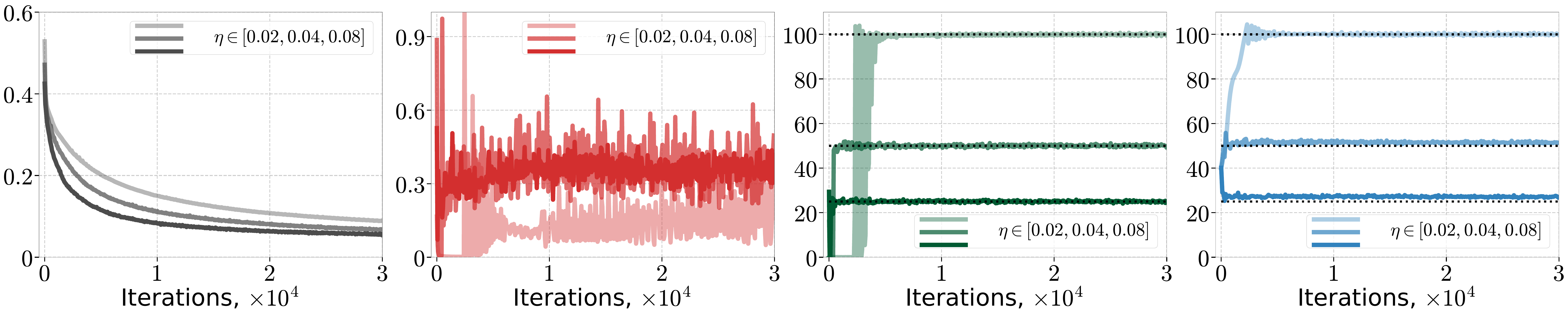}
    \end{tabular}
    \caption{Vanilla \algname{GD} on CIFAR-10-5k. {\bf Top}: MLP; {\bf bottom}: CNN. Columns show train loss, gradient norm, directional smoothness, and standard $\ell_2$ sharpness. Dashed lines mark the stability threshold $2/\eta$.}
    \label{fig:gd_all}
\end{figure*}

\subsection{Connection to Sharpness}\label{sec:direct_smooth_and_sharpness}

Next, we show how directional smoothness is closely related to a Hessian quantity that we will call the generalized sharpness. Assume that $\cL$ is twice continuously differentiable and that $\bd_t\neq0$. We can relate~\eqref{eq:dirsmooth-opt} to sharpness by plugging in one step of non-Euclidean \algname{GD} \eqref{eq:gradmethod} into~\eqref{eq:dirsmooth-opt} and using Taylor's theorem with the Lagrange form of the remainder:
\begin{align}
  D^{\norm{\cdot}}(\bw_{t},\bw_{t+1}) & \coloneqq \frac{  \cL(\bw_{t+1})  - \cL(\bw_{t})  - 
  \dotprod{\nabla \cL(\bw_{t}),-\eta\bd_t}
  }{\frac{1}{2}\|\bw_{t+1}-\bw_{t}\|^2} =  \frac{\bd_t^\top \nabla^2  \cL(\bw_t - \xi_t\eta\bd_t)  \, \bd_t }{\|\bd_t\|^2} \label{eq:D-as-Hess},
\end{align}
where $\xi_t \in (0, \,1)$.
We can further upper-bound \eqref{eq:D-as-Hess} by taking the maximum over all directions
 \begin{align} \label{eq:T2-d-S}
  D^{\norm{\cdot}}(\bw_{t},\bw_{t+1})      
  &\leq \max_{\bd\neq0 }  \frac{\bd^\top \nabla^2  \cL(\bw_t - \xi_t\eta\bd_t )\bd }{\|\bd\|^2} 
\end{align}
If we further assume that the Hessian is almost constant over the line segment $\{ \bx : \bx= \bw_t -\eta \xi\bd_t, \xi \in [0,\;1]\},$ we arrive at the following definition of generalized sharpness:

\begin{definition}
    For any norm $\norm{\cdot}$, we define the \emph{generalized sharpness} as:
    \begin{align}\label{eq:sharpness-norm}
        S^{\norm{\cdot}}(\bw) &\coloneqq \max_{\bd\neq0}  \frac{\bd^\top \nabla^2  \cL(\bw)\bd }{\|\bd\|^2} \\
        &= \max_{\|\bd\|=1}  \bd^\top \nabla^2  \cL(\bw)\bd.\label{eq:sharpness-norm-unit-ball}
\end{align}
\end{definition}

\begin{wrapfigure}{r}{0.53\textwidth}
\hfill
\vspace{-8mm}
\begin{minipage}{\linewidth}
\begin{algorithm}[H]

\begin{algorithmic}[1]
\caption{Frank-Wolfe to approximate $(\star)$}
    \label{alg:FrankWolfe}
     \State {\bf Input:} norm $\norm{\cdot}$, $\gamma_k = \frac{2}{2+k}$, $S_0=0$
       
    \For{restart $m=1, \dots, M$}
        \State$\bd_0 \sim \cN(0,\mI), \; \bd_0 = \Pi_{\norm{\cdot}= 1}(\bd_0)$
        
        \For{$k=0, 1,\dots, K-1$}
            \State $\bv_k = \mathrm{argmax}_{\|\bv\|\le 1}\left<\nabla^2\cL(\bw_t)\bd_k,\bv\right>$

            \State $\bd_{k+1} = (1-\gamma_k)\bd_k + \gamma_k\bv_k$
     \EndFor
      \State $\textcolor{orange}{\bu_K = \Pi_{\norm{\cdot}=1}(\bd_K)}$
     \State $\hat{S}_m = \bd_K^\top\nabla^2\cL(\bw_t)\bd_K$
     
        \State  $S_m = \max\{S_{m-1}, \hat{S}_m\}$
     \EndFor

    \State {\bf Return:} $S_M$
\end{algorithmic}	
\end{algorithm}
\end{minipage}
\end{wrapfigure}

The optimization problem \eqref{eq:sharpness-norm-unit-ball} involves \emph{maximizing} a quadratic function over a nonconvex unit sphere, and is thus challenging to solve in general.
For some choices of norm, the problem \eqref{eq:sharpness-norm} has an analytical solution (e.g., vanilla \algname{GD} or \algname{Block CD}). For other norms, we heuristically approximate the solution to \eqref{eq:sharpness-norm-unit-ball} by solving a relaxed problem 
\[
\max_{\|\bd\|\le 1}\bd^\top\nabla^2\cL(\bw)\bd,
\]
using the Frank-Wolfe (\algname{FW}) algorithm \citep{frank1956algorithm} run from multiple random restarts (Algorithm~\ref{alg:FrankWolfe}). On smooth, non-convex objectives, \algname{FW} is known to converge to a first-order stationary point over convex sets \citep{lacoste2016convergence}.

Since a stationary point is not necessarily the global maximum, we repeatedly run Frank-Wolfe from multiple random restarts and then take the maximum over all trials. Unless a closed form is available, the generalized sharpness reported in the experiments should therefore be interpreted as a multi-restart \algname{FW} estimate. Empirically, we usually observe that the generalized sharpness estimated using this procedure converges to some limiting value as the number of random restarts grows. The true global maximizer does not necessarily lie on the boundary, but using the projection step onto the unit norm sphere in Algorithm~\ref{alg:FrankWolfe} always improved the estimation in our experiments. See Appendix~\ref{sec:frank_wolfe} for a more detailed discussion of our procedure for approximating \eqref{eq:sharpness-norm}.

\section{Examples of Non-Euclidean Gradient Descent}
We begin by showing that the generalized sharpness \eqref{eq:sharpness-norm} recovers previously derived notions of sharpness, establishing consistency with known \algname{EoS} characterizations. We then examine generalized sharpness under several non-Euclidean norms. All experiments are full-batch, use MSE loss for MLP/CNN models and the CE loss for Transformers, and report full-batch Hessian-vector-product estimates of directional smoothness and generalized sharpness; architecture and implementation details are deferred to the appendix.

\textbf{Euclidean $\ell_2$ Norm.} We consider a standard Euclidean $\ell_2$ norm. In this case, the sharpness measure \eqref{eq:sharpness-norm} can be computed explicitly. Indeed, the maximum in \eqref{eq:sharpness-norm} equals the largest eigenvalue of the Hessian $\lambda_{\max}(\nabla^2\cL(\bw_t))$. This result coincides with the sharpness measure introduced in \citet{cohen2021edgeofstability}. In Figure~\ref{fig:gd_all}, we report the training dynamics of vanilla \algname{GD}, flattening all parameters of the networks. We observe that the directional smoothness and sharpness hover at $\nicefrac{2}{\eta}$ when the algorithm enters \algname{EoS} stage, supporting our claims in \eqref{eq:D-less-than-2-eta}.

\textbf{Preconditioned $\ell_2$ Norm. } Let $\mP_t \in \R^{d\times d}$ be a symmetric positive definite matrix, which we will use as a preconditioner.
That is, we define the preconditioned $\ell_2$ norm (also referred to as the Mahalanobis distance) by $\| \bw \|_{\mP_t}^2 \coloneqq  \dotprod{\mP_t \bw, \bw}  \; = \; \| \mP_t^{1/2} \bw\|_2^2.$ Under this norm, preconditioned \algname{GD}~\eqref{eq:gradmethod} is given by
\begin{equation}
  \bw_{t+1} =  \bw_t - \eta\mP^{-1}_t \nabla \cL(\bw_t).
\end{equation}

This case includes \algname{AdaGrad}~\citep{duchi2011adaptive}, \algname{RMSprop}~\citep{tieleman2012rmsprop} and Newton's method as special cases.
According to~\eqref{eq:sharpness-norm}, the correct notion of sharpness for this norm is given by
\begin{align}
    S^{\norm{\cdot}_{\mP_t}}(\bw) &\coloneqq \max_{\bd\neq0}  \frac{\bd^\top \nabla^2  \cL(\bw )\bd }{\|\bd\|_{\mP_t}^2} 
    = \max_{\bv\neq0}  \frac{\bv^\top \mP_t^{-1/2} \nabla^2  \cL(\bw ) \mP_t^{-1/2} \bv }{\|\bv\|_{2}^2},\notag
\end{align}
where we arrived at last equality by using the change of variables $\bv = \mP_t^{1/2} \bd.$ This definition matches the preconditioned-Hessian sharpness used for adaptive methods by \citep{cohen2025centralflow}.

\begin{figure*}[!t]

    \centering
    \begin{tabular}{c}
         \hspace{-3mm}\includegraphics[width=1\linewidth]{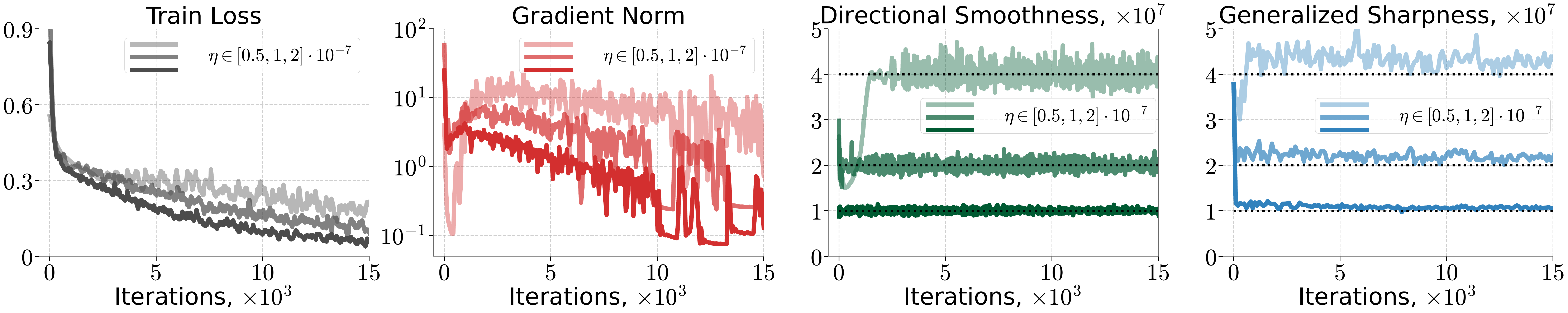} \\
         \includegraphics[width=1\linewidth]{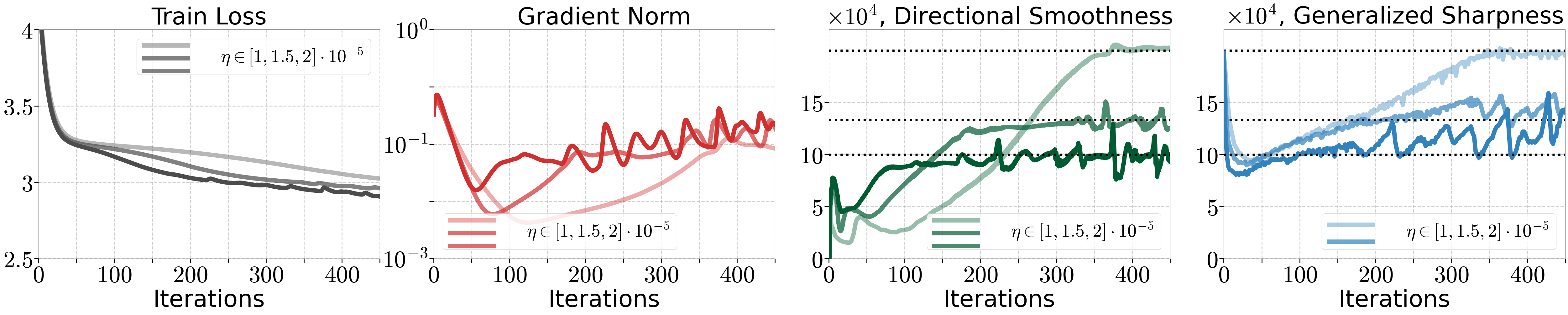} 
    \end{tabular}
    \caption{$\ell_{\infty}$-descent on {\bf Top}: MLP CIFAR-10-5k; {\bf bottom}: Transformer on Tiny Shakespeare. Columns show train loss, gradient norm, directional smoothness, and generalized sharpness \eqref{eq:max-inner-norm}. Dashed lines mark the stability threshold $2/\eta$. }
    \label{fig:signgd_all}
\end{figure*}

\begin{figure*}[!t]
    \centering
    \begin{tabular}{c}
         \includegraphics[width=1\linewidth]{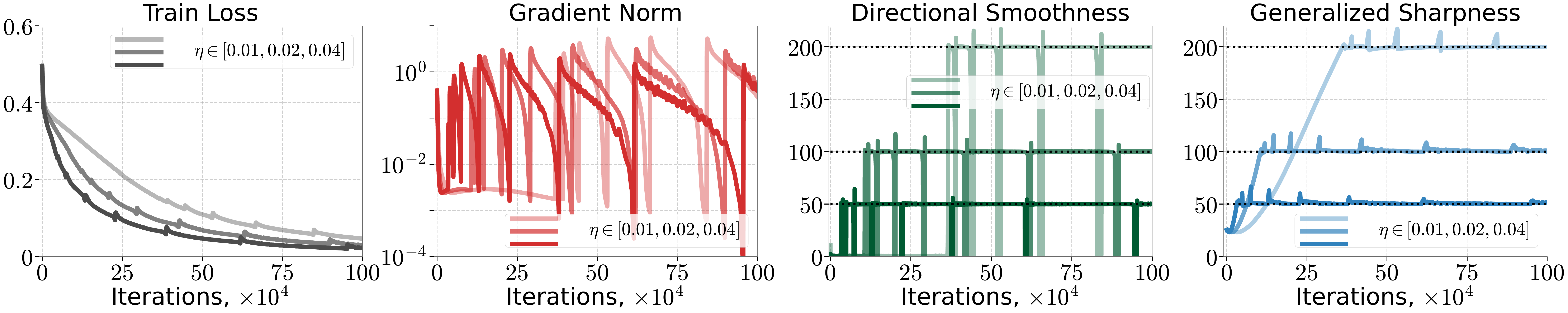} \\
         \includegraphics[width=1\linewidth]{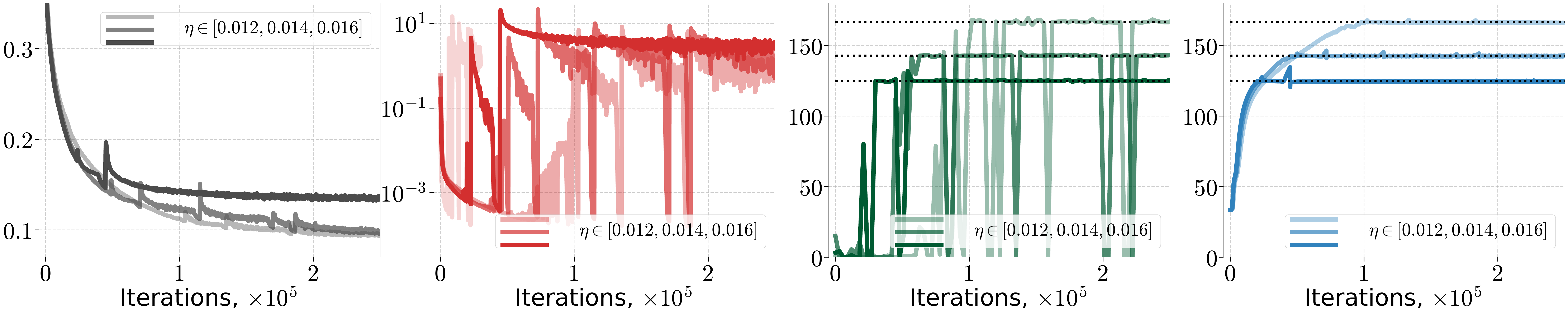}
    \end{tabular}
    \caption{\algname{Block CD} on CIFAR-10-5k. {\bf Top}: MLP, {\bf bottom}: CNN. Columns show train loss, gradient norm, directional smoothness, and generalized sharpness \eqref{eq:sharpness-12norm}. Dashed lines mark the stability threshold $2/\eta$.}
    \label{fig:blockgd_all}
\end{figure*}

\paragraph{Infinity $\ell_{\infty}$ Norm.}
Here we consider the infinity norm over the parameters of the neural network, that is $\norm{\bw}_{\infty} \coloneqq \max_{j\in[d]} |\bw_{j}|.$ The resulting method~\eqref{eq:gradmethod} is the following variant of $\ell_{\infty}$-descent  given by
\begin{equation}
  \bw_{t+1} 
  = \bw_t -  \eta \| \nabla \cL(\bw_t)\|_1\mbox{sign}(\nabla \cL(\bw_t)).
\end{equation}
The corresponding definition of sharpness~\eqref{eq:sharpness-norm} under this norm is given by
\begin{align}\label{eq:max-inner-norm}
S^{\norm{\cdot}_{\infty}}(\bw) 
&=  \max_{\bd}   \bd^\top \nabla^2  \cL(\bw)\bd\quad \mbox{s.t. } \| \bd\|_{\infty} = 1.
\end{align}
This optimization problem~\eqref{eq:max-inner-norm} has also appeared in statistical physics, where it is equivalent to finding the maximum energy—or, correspondingly, the \emph{ground state} in a \emph{flipped sign} formulation—of an Ising spin glass on the hypercube.  For quadratic Hessian, this corresponds to maximizing the Hamiltonian over binary spin assignments \(d_i = \pm1\).  The problem is known to be NP-hard in general \citep{zhang2025ising3d, kochenberger2014qubo}. Therefore, we use Algorithm~\ref{alg:FrankWolfe} to approximate \eqref{eq:max-inner-norm}, with the projection operator being $\Pi_{\norm{\cdot}_{\infty}=1}(\cdot) \equiv {\rm sign}(\cdot)$.

Figure~\ref{fig:signgd_all} presents the convergence results of $\ell_{\infty}$-descent, applied to the flattened networks' parameters. In this case, directional smoothness plateaus at $\nicefrac{2}{\eta}.$ A similar behavior appears for generalized sharpness. We observe several interesting phenomena. First, in some cases, the generalized sharpness hovers \emph{slightly above} the stability threshold $\nicefrac{2}{\eta}$.  As we review in Appendix~\ref{sec:gap_generalized_sharpness_and_eta}, a similar effect has been observed for Euclidean \algname{GD} when there are multiple Hessian eigenvalues at the edge of stability, and we hypothesize this behavior could have a similar origin.
Second, \algname{FW} requires a sufficient number of restarts to obtain a good approximation of the generalized sharpness in \eqref{eq:max-inner-norm}: see Figure~\ref{fig:signgd_FW_restarts}, because the \algname{FW} estimate can underestimate the global maximum when too few restarts are used. In Figure~\ref{fig:signgd_gener_sharpness_vs_l2_sharpness}, we observe on the full CIFAR-10 dataset that the generalized sharpness stabilizes near $2/\eta$, while the $\ell_2$ sharpness remains well below. This shows that the \algname{EoS} threshold is not captured by the standard $\ell_2$ sharpness in this setting, but is captured by the proposed generalized sharpness.

\textbf{Block $\ell_{1,2}$ Norm.} In this case, we take into account the block-wise structure of neural networks. Let the parameters $\bw$ be split into $L$ blocks, i.e., $\bw = (\bw^1,\ldots, \bw^L) \in \R^{d_1} \oplus \R^{d_2} \ldots \oplus \R^{d_L}$ where $\sum_{\ell=1}^L d_\ell = d.$ We consider \algname{GD} in the $\norm{\cdot}_{1,2}$ norm\footnote{In this case, each block $\bw^{\ell}$ is treated as a vector.} defined as $\|\bw\|_{1,2} \coloneqq \sum_{\ell=1}^L\|\bw^\ell\|_2.$
Let $\ell_{\max} \coloneqq {\rm argmax}_{\ell\in[L]}\|\nabla_{\bw^\ell} \cL(\bw_t)\|$. Then \algname{GD} in this norm reduces to \algname{Block CD}
\begin{align}
\bw_{t+1}^{\ell_{\max}} &= \bw_t^{\ell_{\max}} - \eta\nabla_{\bw^{\ell_{\max}}} \cL (\bw_t),\\ 
\bw_{t+1}^{\ell} &= \bw_t^{\ell} \quad \mbox{for } \ell \neq \ell_{\max}.\notag
\end{align}
The derivations of \algname{GD} in this norm are given in Lemma~\ref{lem:block_gd}. The corresponding definition of  sharpness~\eqref{eq:sharpness-norm} under this norm is given by
\begin{align} \label{eq:sharpness-12norm}
\hspace{-0.2cm}S^{\norm{\cdot}_{1,2}}(\bw_t) 
=  \max_{\bd }    \dotprod{\bd, \nabla^2  \cL(\bw_t )\bd } \mbox{        s.t. } \| \bd\|_{1,2} = 1.
\end{align}
The solution to \eqref{eq:sharpness-12norm} can be given explicitly if the Hessian $\nabla^2\cL(\bw_t)$ is PSD (see Lemma~\ref{lem:sharpness_l12_norm})
\begin{equation}
S^{\norm{\cdot}_{1,2}}(\bw) = \max_{\ell\in[L]}\lambda_{\max}(\nabla^2_{\bw^\ell}\cL(\bw)).
\end{equation}
However, for the general $\nabla^2\cL(\bw_t)$, solving \eqref{eq:sharpness-12norm} is NP-hard \citep{bhattiprolu2021framework}, but still can be approximated by the \algname{FW} algorithm. The exact steps of \algname{FW} in this case are derived in Lemma~\ref{lem:FW_l12_norm}.

Figure \ref{fig:blockgd_all} shows the convergence of \algname{Block CD}, where we adopt the natural block-wise structure of the network -- each block corresponding to a weight matrix or bias vector of a layer.
The generalized sharpness, which is approximated by the maximum eigenvalue of each block of the Hessian, approaches the threshold $\nicefrac{2}{\eta}$, supporting our theoretical observations. In contrast, the directional smoothness curves display sharper dynamics: while they also reach $\nicefrac{2}{\eta}$, they exhibit sudden drops whenever training shifts from a layer already at the \algname{EoS} regime to one that has not yet reached it. These drops are also mirrored in the gradient norm dynamics. Similar to $\ell_{\infty}$, \algname{FW} algorithm is sensitive to the number of restarts $M$. Figure~\ref{fig:blockgd_true_vs_fw} reports that \algname{FW} with $M=10$ provides a stable estimation of the generalized sharpness, while \algname{FW} with $M=1$ does not.

\textbf{Spectral $\norm{\cdot}_{2\rightarrow2}$ Norm.} To handle matrix norms, we shift perspective and treat the layers of the network as blocks of matrices\footnote{We use upper case notation to highlight the matrix structure.} $\mW \coloneqq  (\mW^1, \ldots, \mW^L)$. In this setting, the natural inner product is the matrix trace $\dotprod{\mW, \mG} \coloneqq \trace{\mW^\top \mG}$. 
\begin{figure*}[!t]
    \centering
    \begin{tabular}{c}
         \hspace{-3mm}\includegraphics[width=1\linewidth]{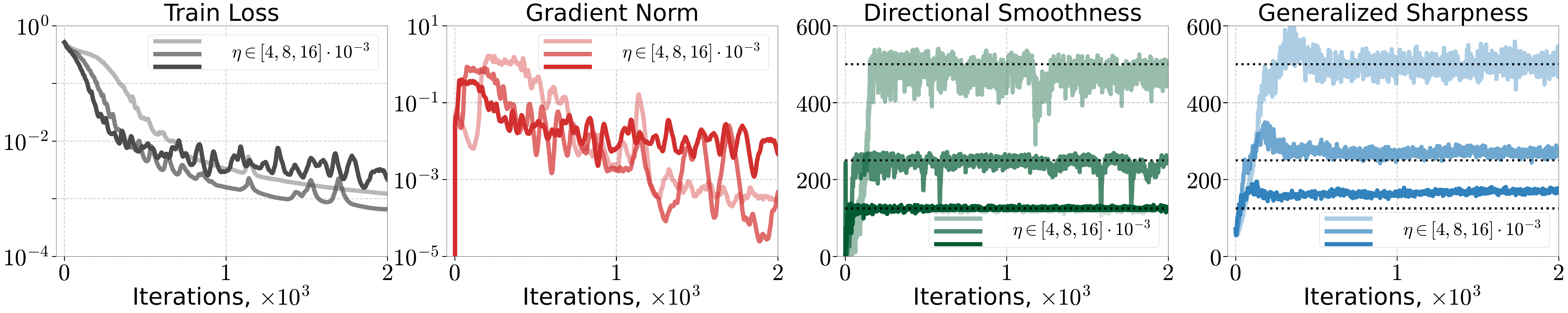} \\
        \hspace{-3mm} \includegraphics[width=1\linewidth]{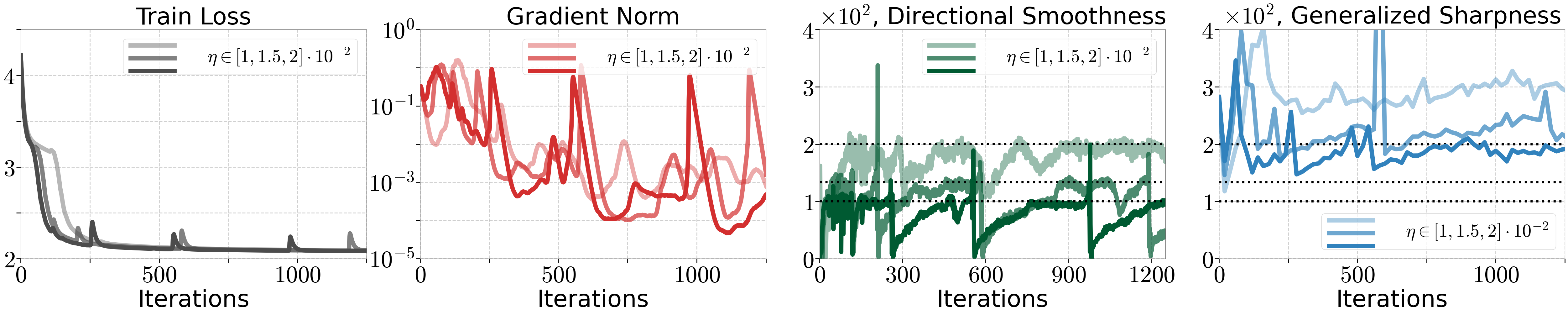}
    \end{tabular}
    \caption{\algname{Spectral GD} on {\bf Top}: MLP, CIFAR-10, {\bf bottom}: Transformer, Tiny Shakespeare. Columns show train loss, gradient norm, directional smoothness, and generalized sharpness \eqref{eq:sharpness-spectral}. Dashed lines mark the stability threshold $2/\eta$.
    }
    \label{fig:muon_all}
\end{figure*}
In this framework, one may endow each block $\mW^{\ell}$ with a matrix norm, and then define a global norm on $\mW$ by specifying an aggregation rule across layers. One particularly neat choice~\cite{bernstein2024old} is max over the spectral norms 
$$\|\mW\|_{\infty,2} \coloneqq \max_{\ell\in[L]} \|\mW^{\ell}\|_2, \; $$
where
$$\|\mW^{\ell}\|_2 \coloneqq \max_{\|\bd\|_2=1}\|\mW^{\ell}\bd\|_2.$$
Under this geometry, the dual gradient is given by the polar factor of the gradient. Concretely, the update is
\begin{align}
    \mW_{t+1}^{\ell} &= \mW_{t}^{\ell} - \eta\left(\sum_{j=1}^{L} \trace{\mS^{j}_t}\right) \mU^{\ell}_t \mV^{\ell}_t,
\end{align}
where $\mU^{\ell}_t \mS^{\ell}_t\mV^{\ell}_t = \nabla_{\mW^{\ell}}\cL(\mW_t)$ is the reduced SVD of the gradient of the $\ell$-th layer. The product $\mU_t^{\ell}\mV_t^{\ell}$ is also known as the  polar factor of the matrix $\nabla_{\mW^{\ell}}\cL(\mW_t)$, which can be computed efficiently on GPU using variants of the Newton-Schulz method \citep{jordan2024muon,higham1986computing} or the \algname{PolarExpress} \citep{amsel2025polar}. The corresponding definition of  sharpness~\eqref{eq:sharpness-norm} under this norm is
\begin{align} \label{eq:sharpness-spectral}
S^{\norm{\cdot}_{\infty,2}}(\mW)
=& \max \dotprod{\mD, \nabla^2  \cL(\mW)[\mD] } \quad \text{s.t. } \|\mD\|_{\infty,2}= 1,
\end{align}
where  the operator $\nabla^2  \cL(\mW)[\mD]$ is the directional derivative of the gradient 
$$\nabla^2  \cL(\mW)[\mD] \coloneqq \frac{d}{d\epsilon}\left.  \nabla  \cL(\mW +\epsilon \mD ) \right|_{\epsilon =0}.$$ This is exactly the operation computed by Hessian-vector-product in PyTorch \citep{paszke2019pytorch}. The solution to \eqref{eq:sharpness-spectral} cannot be computed explicitly. Therefore, we rely on the \algname{FW} algorithm to approximate it. The exact steps of \algname{FW} are derived in Lemma~\ref{lem:FW_spectral}.

Figure~\ref{fig:muon_all} presents the convergence dynamics of \algname{Spectral GD}. As in previous cases, both directional smoothness and generalized sharpness approach the stability threshold $\nicefrac{2}{\eta}$. Notably, as with the $\ell_{\infty}$ norm, the generalized sharpness gradually reaches this threshold but remains slightly above it. However, in contrast to $\ell_{\infty}$ and $\ell_{1,2}$ norms, \algname{FW} is not sensitive to the number of restarts $M$ (see Figure~\ref{fig:muon_FW_restarts}). Figure~\ref{fig:spectral_gd_gener_sharpness_vs_l2_sharpness} presents additional results on the full CIFAR-10 dataset, showing that the generalized sharpness stabilizes at the threshold $2/\eta$, whereas the $\ell_2$ sharpness remains far below throughout training. This again shows that  \algname{EoS}  only occurs with respect to the generalized sharpness, and not the standard $\ell_2$ definition of sharpness.

\begin{figure*}[!t]
    \centering
    \begin{tabular}{c}
         \includegraphics[width=1\linewidth]{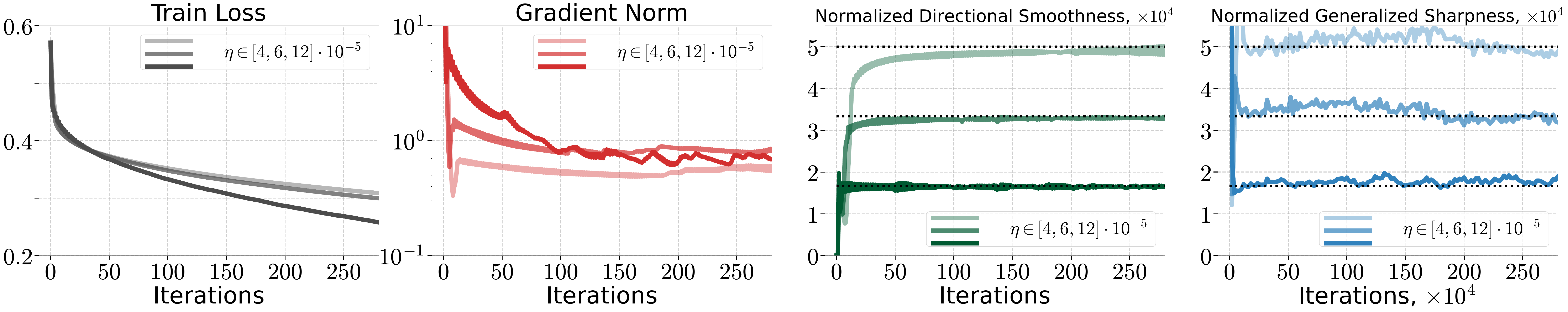} \\
         \includegraphics[width=1\linewidth]{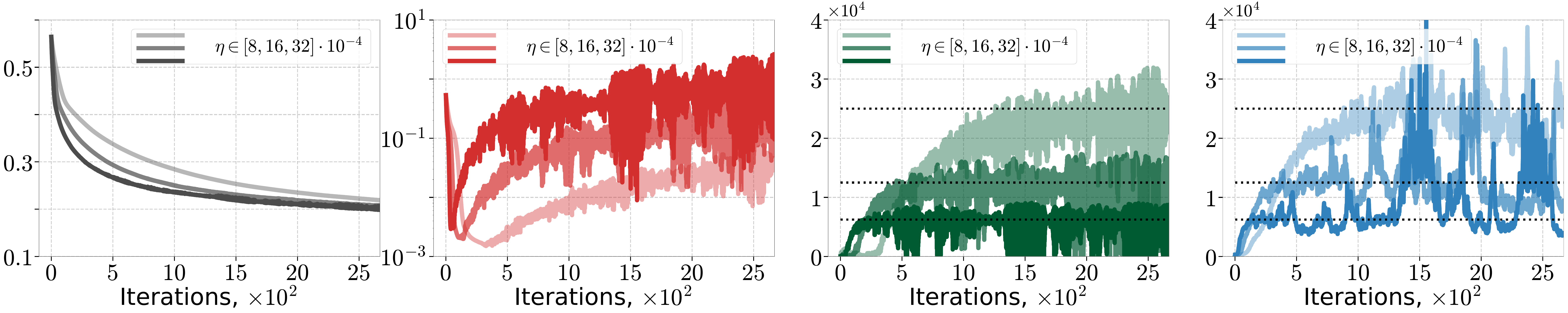}
    \end{tabular}
    \caption{Normalized non-Euclidean GD. {\bf Top}: \algname{SignGD} on CIFAR-10-5k; {\bf bottom}: normalized \algname{Spectral GD} on CIFAR-10. Columns show train loss, gradient norm, directional smoothness divided by $\|\nabla \cL(\bw_t)\|_*$, and generalized sharpness divided by $\|\nabla \cL(\bw_t)\|_*$. Dashed lines mark the stability threshold $2/\eta$.}
    \label{fig:normalized_GD_all}
\end{figure*}

\section{Normalized Non-Euclidean Gradient Descent}

In this section, we show that our  observations extend to normalized non-Euclidean \algname{GD}. In more detail, the normalized update rule \eqref{eq:normalized-gradmethod} with step-size $\eta$ can be rewritten as the unnormalized update rule \eqref{eq:gradmethod} with effective step-size $\tilde{\eta} = \frac{\eta}{\|\nabla \cL(\bw_t)\|_*}.$ Therefore, the corresponding directional smoothness $D^{\norm{\cdot}}(\bw_t, \bw_{t+1})$ and generalized sharpness of normalized non-Euclidean \algname{GD} hover at the threshold $\frac{2}{\tilde{\eta}} = \frac{2\|\nabla \cL(\bw_t)\|_*}{\eta}.$ This can also be derived by substituting one step of normalized non-Euclidean \algname{GD} into \eqref{eq:dirsmooth-opt}, giving
\begin{equation} 
    \Delta\cL_t = - \eta \left(\|\nabla \cL(\bw_t)\|_*-\frac{\eta}{2}D^{\norm{\cdot}}(\bw_{t}, \bw_{t+1})\right). \label{eq:quadratic-bound-ngd}
\end{equation}
Therefore, whenever $\|\nabla\cL(\bw_t)\|_*>0$, the loss decreases if \emph{and only if}
\begin{equation}\label{eq:D-less-than-2-eta-grad-norm}
\Delta\cL_t \le 0
\Longleftrightarrow
D^{\norm{\cdot}}(\bw_t,\bw_{t+1}) \le \frac{2\|\nabla \cL(\bw_t)\|_*}{\eta}.
\end{equation}
The derivations in Section~~\ref{sec:direct_smooth_and_sharpness} apply to normalized non-Euclidean \algname{GD}. Figure~\ref{fig:normalized_GD_all} empirically confirms the claims for \algname{SignGD} and normalized \algname{Spectral GD}
, extending our \algname{EoS} observations to more practical algorithms. We demonstrate that the directional smoothness and generalized sharpness normalized by the dual gradient norm, i.e., $\frac{D^{\norm{\cdot}}(\bw_t,\bw_{t+1})}{\|\nabla \cL(\bw_t)\|_*}$ and $\frac{S^{\norm{\cdot}}(\bw_t)}{\|\nabla\cL(\bw_t)\|_*}$ respectively, hover at the stability threshold $\nicefrac{2}{\eta}.$

Normalized $\ell_\infty$-descent can be viewed as a special case of \algname{RMSprop} with $\beta_2 = 0$. \algname{RMSprop} was shown in \citet{cohen2022adaptive} to obey a different \algname{EoS} characterization for large, typical values of $\beta_2$.  We show in Figure~\ref{fig:rmsprop} that their characterization breaks down when $\beta_2$ is small, as in the case of \algname{SignGD}.

\begin{figure*}[!t]
    \centering
    \begin{tabular}{c}
         \includegraphics[width=1\linewidth]{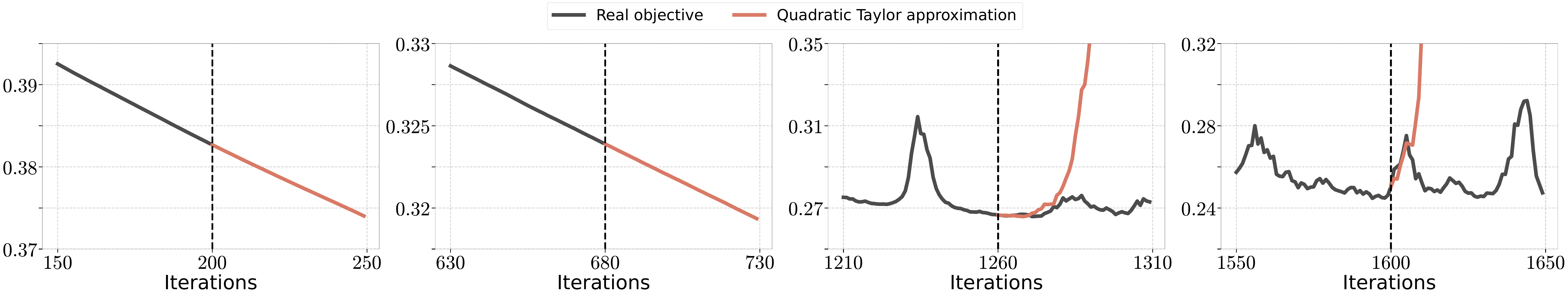} 
    \end{tabular}
    \caption{
    MSE loss for \algname{Spectral GD} on a CNN trained on CIFAR-10 with $\eta=0.002$. At four marked iterations, we switch from the true objective to the quadratic Taylor approximation at the current iterate. In the two left panels, before \algname{EoS}, the quadratic approximation closely tracks the true loss; in the two right panels, during \algname{EoS}, it quickly diverges.}
    \label{fig:quadratic_approximation_muon}
\end{figure*}

\section{Towards Understanding the Underlying Mechanism}

For vanilla \algname{GD}, the significance of $\lambda_{\max}(\nabla^2\cL(\bw_t))$ is well understood through the local quadratic Taylor approximation. On a quadratic with any eigenvalue larger than $2/\eta$, \algname{GD} oscillates with exponentially growing magnitude along the corresponding eigendirection from all but a measure-zero set of initializations. In neural networks, these local oscillations explain why the loss can temporarily increase once progressive sharpening crosses the threshold; higher-order terms, particularly cubic terms, then provide a self-stabilizing effect that can reduce sharpness and prevent divergence.

For non-Euclidean \algname{GD}, since we observe that the generalized sharpness \eqref{eq:sharpness-norm} (or at least, our estimate of it) hovers near $2/\eta$, it is natural to ask
if an analogous explanation holds.
Standard arguments from convex optimization give the following result (proof in Appendix \ref{sec:quadratics_proof}).

\begin{restatable}{theorem}{thquadraticsone}\label{th:quadratics1}
Let $\cL(\bw) \coloneqq \frac{1}{2} \bw^\top \mH\bw$ for some $\mH \succ 0$.  For some norm $\norm{\cdot}$, define the generalized sharpness $S = S^{\norm{\cdot}} := \max_{\|\bd\|\le 1} \bd^\top \mH \bd$. If we run non-Euclidean \algname{GD} (Definition~\ref{def:gradmethod}) on $\cL$ with any step-size $\eta < \nicefrac{2}{S}$, it will converge at a linear rate starting from any initial point $\bw_0$.
\end{restatable}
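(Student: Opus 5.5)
The plan is to use the fact that on a quadratic the generalized sharpness is a global smoothness constant with respect to $\norm{\cdot}$. Combined with the exact descent identity \eqref{eq:quadratic-bound-gd}, this gives a per-step decrease proportional to $\|\nabla\cL(\bw_t)\|_*^2$. Strong convexity of $\cL$ in the same norm then turns this into a Polyak--{\L}ojasiewicz-type inequality $\|\nabla \cL(\bw)\|_*^2 \ge 2\mu(\cL(\bw)-\cL^*)$, with $\cL^*=0$, which yields linear convergence of the function values and hence of the iterates.

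\textbf{Step 1: descent.} Since $\cL$ is quadratic, the Hessian is constant. So \eqref{eq:D-as-Hess} holds exactly with $D^{\norm{\cdot}}(\bw_t,\bw_{t+1}) = \bd_t^\top \mH \bd_t/\|\bd_t\|^2$. By homogeneity this is at most $\max_{\|\bd\|= 1}\bd^\top\mH\bd \le S$. Plugging into \eqref{eq:quadratic-bound-gd} gives
\begin{equation*}
\cL(\bw_{t+1}) \le \cL(\bw_t) - \eta\Bigl(1-\tfrac{\eta S}{2}\Bigr)\|\nabla\cL(\bw_t)\|_*^2 ,
\end{equation*}
and $c := \eta(1-\eta S/2) > 0$ because $\eta<2/S$. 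If $\bd_t=0$, then $\nabla\cL(\bw_t)=0$, so $\bw_t=0$ is the minimizer and we are done.

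\textbf{Step 2: PL inequality in the dual norm.} Let $\mu := \min_{\|\bd\|=1}\bd^\top\mH\bd$. This is positive because $\mH\succ0$ and the unit sphere of any norm in $\R^d$ is compact and avoids $0$. Then $\cL(\by)\ge \cL(\bw)+\langle\nabla\cL(\bw),\by-\bw\rangle+\frac{\mu}{2}\|\by-\bw\|^2$ for all $\bw,\by$. Minimizing both sides over $\by$ and using the definition of the dual norm \eqref{def:dualnorm_dualgrad} (the right-hand minimum equals $\cL(\bw)-\frac{1}{2\mu}\|\nabla\cL(\bw)\|_*^2$) gives $0=\cL^*\ge \cL(\bw)-\frac{1}{2\mu}\|\nabla\cL(\bw)\|_*^2$. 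This is the desired PL inequality.

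\textbf{Step 3: combine.} Steps 1 and 2 give $\cL(\bw_{t+1})\le(1-2\mu c)\cL(\bw_t)$. We have $1-2\mu c\ge0$ since $\cL\ge0$, and $1-2\mu c<1$. Hence $\cL(\bw_t)\le(1-2\mu c)^t\cL(\bw_0)$. Since $\cL(\bw)\ge\frac{\lambda_{\min}(\mH)}{2}\|\bw\|_2^2$, the iterates also converge linearly to $0$.

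The main subtlety is Step 1: we must check that bounding the Rayleigh-type ratio by $S$ is legitimate even though $S$ is defined over the ball rather than the sphere. Positivity of $\mH$ makes the maximum attained on the sphere, and homogeneity handles the rest. We also need the degenerate case $\bd_t=0$ and the non-uniqueness of the dual gradient; any selection from the argmax satisfies $\langle\nabla\cL,\bd_t\rangle=\|\nabla\cL\|_*^2$ and $\|\bd_t\|=\|\nabla\cL\|_*$, which is all the argument uses.
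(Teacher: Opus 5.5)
Your proof is correct and follows the paper's argument: a PL inequality $\|\nabla\cL(\bw)\|_*^2\ge 2\mu\,\cL(\bw)$ obtained from strong convexity in $\|\cdot\|$ with $\mu=\min_{\|\bd\|=1}\bd^\top\mH\bd$, combined with a descent inequality with constant $S$, giving the contraction factor $1-2\mu\eta(1-\eta S/2)$. The only difference is in the descent step. You bound the ratio $\bd_t^\top\mH\bd_t/\|\bd_t\|^2\le S$ directly from the constant Hessian, whereas the paper proves that $\nabla\cL$ is $S$-Lipschitz from $\|\cdot\|$ to $\|\cdot\|_*$, which needs the extra PSD argument $\sup_{\|\bu_1\|=\|\bu_2\|=1}\bu_1^\top\mH\bu_2=S$; your shortcut is simpler and valid, and your checks that $1-2\mu c\ge 0$ and that the iterates themselves converge are small additions the paper omits.
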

This theorem generalizes, to non-Euclidean norms, the fact that \algname{GD} is convergent on quadratic functions so long as the sharpness is less than $2/\eta$.
However, for the Euclidean norm, the key point is that the converse is also true: gradient descent \emph{diverges} on quadratics if the sharpness is \emph{greater} than $2/\eta$.
We now show that this property also carries over, to an extent, to the non-Euclidean setting.

\begin{restatable}{theorem}{thquadraticstwo}\label{th:quadratics2}
Let $\cL(\bw) \coloneqq \frac{1}{2} \bw^\top \mH\bw$ for some $\mH \succ 0$.  For some norm $\| \cdot \|$, define the generalized sharpness $S := \max_{\|\bd\|\le 1} \bd^\top \mH \bd$. If we run non-Euclidean \algname{GD} (Definition~\ref{def:gradmethod}) on $\cL$, there exists an initialization $\bw_0$ and a valid dual-gradient selection from which \algname{GD} diverges for any step-size $\eta > \nicefrac{2}{S}$.
\end{restatable}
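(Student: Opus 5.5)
The plan is to build an explicit divergent trajectory that stays on the line spanned by a maximizer of the generalized sharpness. Along that line, non-Euclidean \algname{GD} acts exactly like one-dimensional \algname{GD} with curvature $S$.

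First, I fix a maximizer. The unit ball $\{\bd:\|\bd\|\le 1\}$ is compact and $\bd\mapsto \bd^\top\mH\bd$ is continuous, so a maximizer $\bd^\star$ exists. Since $\mH\succ0$, the objective is a positive definite (hence convex) quadratic. A convex function attains its maximum over a compact convex set at an extreme point, and scaling gives $(\lambda\bd)^\top\mH(\lambda\bd)=\lambda^2\,\bd^\top\mH\bd$. Together these let me take $\|\bd^\star\|=1$, with $S=(\bd^\star)^\top\mH\bd^\star>0$.

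The key step is a first-order optimality fact: $\bd^\star$ is a valid dual gradient of $\mH\bd^\star$, and $\|\mH\bd^\star\|_*=S$. By convexity of $q(\bd)=\frac12\bd^\top\mH\bd$, for every $\|\by\|\le1$,
\[
q(\by)\ge q(\bd^\star)+\dotprod{\mH\bd^\star,\by-\bd^\star}.
\]
Combined with $q(\by)\le q(\bd^\star)$, this gives $\dotprod{\mH\bd^\star,\by}\le\dotprod{\mH\bd^\star,\bd^\star}=S$. Hence $\bd^\star\in\argmax_{\|\by\|=1}\dotprod{\mH\bd^\star,\by}$ and $\|\mH\bd^\star\|_*=S$. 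By homogeneity, for any scalar $c\ne0$ the gradient $c\,\mH\bd^\star$ has dual norm $|c|S$, and $\mathrm{sign}(c)\,\bd^\star$ is a valid dual-gradient selection for it. I expect this step to be the main obstacle. It is the only place where the geometry of the norm enters, and it is why the statement says ``a valid dual-gradient selection'': the argmax may fail to be unique, so I must choose $\pm\bd^\star$ at each step.

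Next, I initialize at $\bw_0=c_0\bd^\star$ with $c_0\neq0$ and prove by induction that $\bw_t=c_t\bd^\star$. Suppose $\bw_t=c_t\bd^\star$. Then $\nabla\cL(\bw_t)=c_t\mH\bd^\star$, and with the selection from the previous step,
\[
\bd_t=|c_t|S\cdot\mathrm{sign}(c_t)\bd^\star=c_tS\,\bd^\star.
\]
Therefore
\[
\bw_{t+1}=c_t(1-\eta S)\bd^\star,
\qquad
c_t=(1-\eta S)^t c_0 .
\]

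Finally, if $\eta>2/S$ then $|1-\eta S|>1$, so $\|\bw_t\|=|c_t|\to\infty$ geometrically. The loss also blows up, since $\cL(\bw_t)=\tfrac12 c_t^2 S\to\infty$. This completes the argument, and the same initialization works simultaneously for every $\eta>2/S$.
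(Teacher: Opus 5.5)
Your proposal is correct and matches the paper's proof. The paper also initializes on the span of a maximizer $\hat{\bd}$ and uses Lemma~\ref{lem:invariant} to get $(\mH\hat{\bd})_*=\hat{\bd}$ and $\|\mH\hat{\bd}\|_*=S$; its proof of that lemma rests on $(\bv-\hat{\bd})^\top\mH(\bv-\hat{\bd})\ge 0$, which is exactly your first-order convexity inequality. It then proves $\bw_t=(1-\eta S)^t\bw_0$ by induction using $(\lambda\bv)_*=\mathrm{sign}(\lambda)\bv_*$, so your explicit treatment of the ball-versus-sphere maximizer and of the sign choice in the dual-gradient selection only spells out details the paper leaves implicit.
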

The full proof is in Appendix \ref{sec:quadratics_proof}, and the crux is the following lemma, which implies that the direction $\hat \bd$ which attains the argmax in the generalized sharpness optimization problem is an invariant direction under the non-Euclidean \algname{GD} update:
\begin{restatable}{lemma}{leminvariant} \label{lem:invariant}
If $\hat{\bd} \in  \argmax_{\|\bd\| = 1}  \,\bd^{\top} \mH \bd$, then 
$(\mH \hat{\bd})_* = \hat{\bd}$.
\end{restatable}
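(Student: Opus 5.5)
We must show that $\hat\bd$ maximizes $\dotprod{\mH\hat\bd,\by}$ over $\|\by\|=1$, which is exactly what it means for $\hat\bd$ to be a valid choice of $(\mH\hat{\bd})_*$ in \eqref{def:dualnorm_dualgrad}. (When the argmax is not a singleton, $(\cdot)_*$ denotes a selection, so the claim is that $\hat\bd$ is an admissible selection.) The approach is a first-order optimality argument that uses the convexity of $\by\mapsto \by^\top\mH\by$ for $\mH\succ0$, in the spirit of a Cauchy--Schwarz inequality in the $\mH$-inner product.

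Fix any $\by$ with $\|\by\|=1$. Since $\mH\succ 0$, the quadratic form is nonnegative on differences, so
\[
0\le (\by-\hat\bd)^\top\mH(\by-\hat\bd)=\by^\top\mH\by-2\dotprod{\mH\hat\bd,\by}+\hat\bd^\top\mH\hat\bd .
\]
Rearranging and using the maximality of $\hat\bd$, namely $\by^\top\mH\by\le \hat\bd^\top\mH\hat\bd$, gives
\[
2\dotprod{\mH\hat\bd,\by}\le \by^\top\mH\by+\hat\bd^\top\mH\hat\bd\le 2\,\hat\bd^\top\mH\hat\bd=2\dotprod{\mH\hat\bd,\hat\bd}.
\]
Hence $\dotprod{\mH\hat\bd,\by}\le\dotprod{\mH\hat\bd,\hat\bd}$ for every unit $\by$. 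Because $\hat\bd$ itself has unit norm, it attains the maximum. This shows both that $\|\mH\hat\bd\|_*=\hat\bd^\top\mH\hat\bd=S$ and that $\hat\bd\in\argmax_{\|\by\|=1}\dotprod{\mH\hat\bd,\by}$, which is the statement.

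As a side remark, the maximum over the sphere $\|\bd\|=1$ coincides with the maximum over the ball $\|\bd\|\le1$ used in Theorems~\ref{th:quadratics1} and \ref{th:quadratics2}, because the quadratic form is $2$-homogeneous and nonnegative, so scaling any point of the ball up to the sphere can only increase it. The only delicate point is this non-uniqueness issue: $\hat\bd$ is an admissible dual gradient rather than necessarily the unique one. That is precisely why Theorem~\ref{th:quadratics2} speaks of ``a valid dual-gradient selection.'' Beyond that, the argument is the single inequality above, and I expect no real obstacle.
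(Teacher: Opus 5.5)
Your proof is correct and is essentially the paper's argument: expand $(\by-\hat\bd)^\top\mH(\by-\hat\bd)\ge 0$, then use maximality of $\hat\bd$ to get $\dotprod{\mH\hat\bd,\by}\le\hat\bd^\top\mH\hat\bd$ for every unit $\by$, so $\hat\bd$ is an admissible dual gradient. Your point that this is a selection rather than a uniqueness claim matches the paper's phrasing (``we may choose the dual gradient''), which is why Theorem~\ref{th:quadratics2} refers to a valid dual-gradient selection.
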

As a result, if the iterate is initialized in $\bw_0 \in \operatorname{span}(\hat \bd)$ then the evolution of $\bw_t$ is given by:
\begin{equation}
    \bw_t = (1 - \eta S)^t \bw_0.
\end{equation}
When $\eta > 2/S \iff S > 2/\eta $, these dynamics oscillate with growing magnitude and diverge. However, we note that Th.~\ref{th:quadratics2} is less strong than what is true for Euclidean \algname{GD}, as Euclidean \algname{GD} diverges from all but a zero-measure set of initializations, whereas Th.~\ref{th:quadratics2} only establishes divergence when the initialization is on a particular line.

Empirically, we can assess whether non-Euclidean \algname{GD} is indeed divergent on the quadratic Taylor approximation when operating on the edge of stability.
In Figure~\ref{fig:quadratic_approximation_muon}, for points during training both before and after entering \algname{EoS}, we switch from running non-Euclidean \algname{GD} on the real objective to running non-Euclidean \algname{GD} on the quadratic Taylor approximation (similar to Appendix E from \citet{cohen2021edgeofstability}).
We observe that \algname{GD} is stable before reaching \algname{EoS}, but divergent afterwards.
This supports the idea that the significance of the generalized sharpness hovering around $2/\eta$ is related to the dynamics becoming divergent on the local quadratic Taylor approximation.

This explanation is still incomplete: our theory proves divergence only for a specific initialization, whereas empirically the quadratic approximation appears to diverge much more generically. Bridging this gap would be an interesting question for future work.

It is worth highlighting an additional point of difference between the Euclidean and non-Euclidean cases.
For Euclidean \algname{GD}, the directional smoothness only starts to grow from $\approx 0$ to $2/\eta$ \emph{after} the sharpness crosses $2/\eta$.
By contrast, for non-Euclidean \algname{GD} under some norms (in particular, $\ell_\infty$ and $\norm{\cdot}_{2\rightarrow2}$), we observe that the directional smoothness starts to climb towards $2/\eta$ \emph{before} the generalized sharpness has reached $2/\eta$ (Appendix~\ref{sec:pre-eos-oscillatory}). During this period, we find that the iterates oscillate in weight space, but the dynamics are not yet divergent on the quadratic Taylor approximation. This suggests an intermediate regime between stability and \algname{EoS} regimes, which does not occur for Euclidean \algname{GD}.
Understanding this behavior would be an interesting question for future work.

\section{Conclusion and Future Work}

We proposed a geometry-aware view of the edge of stability for non-Euclidean \algname{GD}. The key object is directional smoothness, which yields an exact loss-change identity and motivates a generalized sharpness \(S^{\norm{\cdot}}\) adapted to the norm defining the update geometry. This generalized sharpness recovers the standard maximum Hessian eigenvalue for Euclidean \algname{GD} and the preconditioned-Hessian sharpness for preconditioned \algname{GD}, while extending naturally to geometries such as \(\ell_\infty\)-descent, \algname{Block CD}, \algname{Spectral GD}, and their normalized variants. 
Across MLP, CNN, and Transformer experiments, the proposed sharpness exhibits progressive sharpening and then hovers near (or just above) the corresponding stability threshold. In contrast  the standard \(\ell_2\) sharpness can fail to capture the observed \algname{EoS} behavior for non-Euclidean \algname{GD}.

Several questions remain open. 
First, our quadratic analysis explains stability below \(2/\eta\) and gives a divergence construction above \(2/\eta\), but it does not yet establish generic divergence for arbitrary initialization in general non-Euclidean norms. 
Second, non-Euclidean methods can exhibit a pre-\algname{EoS} oscillatory regime in which directional smoothness increases before generalized sharpness reaches the threshold; understanding this intermediate regime is an important direction for future work. 
Finally, extending the framework beyond full-batch non-Euclidean \algname{GD} to stochastic, momentum-based, and adaptive optimizers would clarify how geometry-aware \algname{EoS} diagnostics should be used in practical large-scale training.

\section*{Acknowledgement}

Rustem Islamov acknowledges the financial support of the Swiss National Foundation, SNSF grant No 207392.

\bibliography{references}
\bibliographystyle{plainnat}

\newpage
\appendix
\counterwithin{figure}{section}
\counterwithin{table}{section}

\vbox{
  {\hrule height 2pt \vskip 0.15in \vskip -\parskip}
  \centering
  {\LARGE\bf Appendix\par}
  {\vskip 0.2in \vskip -\parskip \hrule height 0.5pt \vskip 0.09in}
}

\newcommand\invisiblepart[1]{%
  \refstepcounter{part}%
  \addcontentsline{toc}{part}{\protect\numberline{\thepart}#1}%
}

\invisiblepart{Appendix}
\setcounter{tocdepth}{2}
\localtableofcontents

\appendixtrue

\section{Discussion on Frank-Wolfe Algorithm}\label{sec:frank_wolfe}

Solving \eqref{eq:sharpness-norm} reduces to the quadratic maximization problem
\begin{equation}\label{eq:max_quadratic}
\max_{\|\bu\|\le 1}\bu^\top \mH \bu,
\end{equation}
for an arbitrary norm $\norm{\cdot}$ and symmetric matrix $\mH$. Even in the convex case where $\mH$ is positive definite, problem \eqref{eq:max_quadratic} is NP-hard \citep{burer2009nonconvex} and is recognized as a fundamental challenge in global optimization \citep{horst2000introduction}. Consequently, without exploiting additional structure, global optimality guarantees cannot be expected from generic first-order methods. Instead, one can provide stationarity-type guarantees or approximation bounds via relaxations \citep{burer2009nonconvex}.

The Frank-Wolfe (\algname{FW}) algorithm is a projection-free method that relies on a linear oracle. For maximization problems such as \eqref{eq:max_quadratic}, each step maximizes the linearization of $\bu^\top\mH\bu$ over the norm ball, equivalently minimizing the linearization of $-\bu^\top\mH\bu$. For $L$-smooth functions over convex domains, which includes \eqref{eq:max_quadratic}, the \algname{FW} algorithm provides convergence to approximate stationary points, measured through the Frank-Wolfe gap
\[
    \cG(\bu) \coloneqq \max_{\|\bw\|\le 1}\langle\bw-\bu,-\mH\bu \rangle,
\]
where the last term comes with a minus sign since we minimize $-\bu^\top\mH\bu$. Specifically, \algname{FW} identifies an iterate $\bu_K$ satisfying $\cG(\bu_K) \le \varepsilon$ in $\cO(\nicefrac{1}{\varepsilon^2})$ iterations, i.e., at rate $\cO(1/\sqrt{K})$ \citep{lacoste2016convergence}. While this guarantee does not imply global optimality for \eqref{eq:max_quadratic}, it provides a principled and certifiable stopping criterion. If the optimal value of \eqref{eq:max_quadratic} is positive, a global maximizer can be chosen on the boundary of the unit ball. Therefore, in the experiments, we add a projection step. In our experiments, this final projection consistently improved the reported estimate.

As an alternative, consider the projected power iteration
\[
\bu_{k+1} = \Pi_{\norm{\cdot}}(\mH\bu_k).
\]
For the Euclidean norm, this reduces to the classical Power method, which converges to the normalized leading eigenvector provided the initialization has a nonzero component along it \citep{golub2013matrix}. For general norms, however, no global convergence guarantees are known: the projected iterates can stall or even cycle--for example, when they approach generalized eigenvectors, namely unit vectors $\bv$ that are fixed points of the linear minimization oracle, $\bv=\argmin{\norm{\bw}=1}\langle\bw-\bv,-\mH\bv \rangle$. Empirically, we found that \algname{FW} provides a good estimate of \eqref{eq:sharpness-norm} when a sufficient number of restarts is used. 

\section{An oscillatory regime before \algname{EoS}}
\label{sec:pre-eos-oscillatory}

In this appendix, we briefly elaborate on an oscillatory regime that occurs for some optimizers (including $\ell_\infty$-descent and \algname{Spectral GD}) \emph{before} the algorithm reaches \algname{EoS}.  This stands in contrast to Euclidean \algname{GD}, which generally does not oscillate before the sharpness reaches $2/\eta$ \citep{cohen2025centralflow}.  

In Figure~\ref{fig:pre-eos}, we train a network using $\ell_\infty$-descent.  Initially, the generalized sharpness is less than $2/\eta$, the directional smoothness is $\approx 0$, and the network’s predictions are not oscillating.  Then, around step 300, even though the generalized sharpness is less than $2/\eta$, the directional smoothness starts to rise and the network’s predictions start to oscillate, which are indications that the iterates are oscillating in weight space.  Finally, around step 450, the generalized sharpness and directional smoothness reach $2/\eta$ and the algorithm reaches \algname{EoS}. The network’s predictions oscillate wildly.

The existence of the pre-\algname{EoS} oscillatory regime is interesting, since no such regime exists for Euclidean \algname{GD}.

\begin{figure}[t]
    \centering
    \begin{tabular}{c}
    \hspace{-3mm}\includegraphics[width=0.9\linewidth]{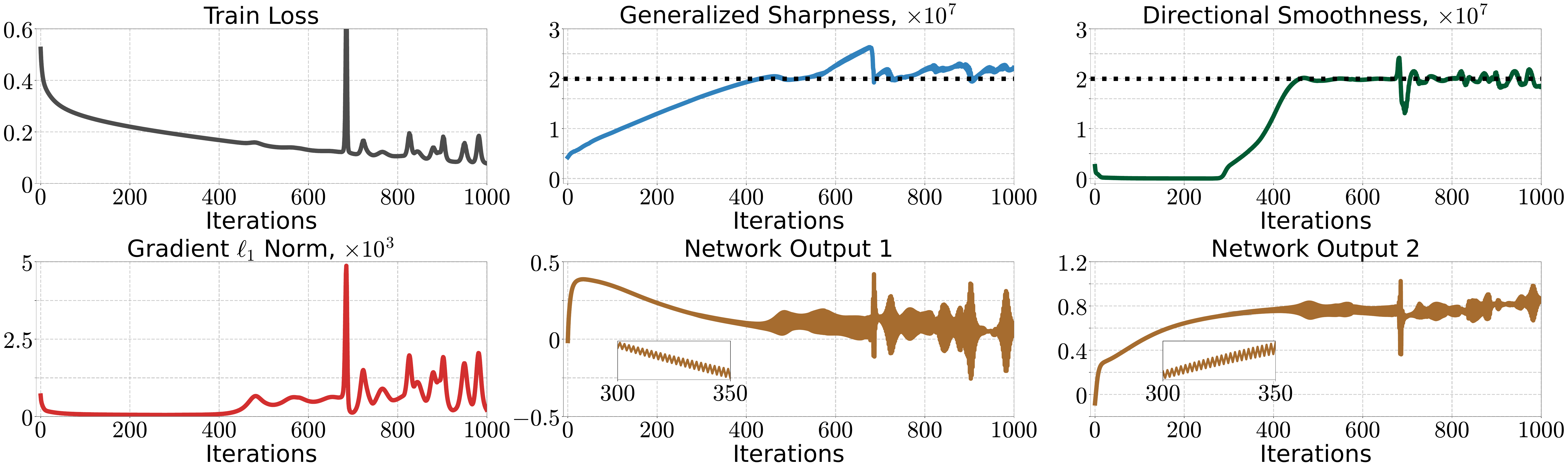} 
    \end{tabular}
    \caption{ \textbf{An oscillatory regime before \algname{EoS}}.  We train a network using $\ell_\infty$-descent. From steps $\sim$300$ - $450, the generalized sharpness is less than $2/\eta$ (so the algorithm is not yet at \algname{EoS}), but the directional smoothness has already started to climb from $\approx 0$ towards $2/\eta$, and the network's predictions have already started to oscillate.  This would not occur for Euclidean \algname{GD}.  This network is a fully connected network trained on a subset of CIFAR-10 using MSE loss and $\eta = 1 \times 10^{-7}$. }
    \label{fig:pre-eos}
\end{figure}

In Figure~\ref{fig:pre-eos-run-quadratic}, we further explore this phenomenon.  At three points during training, we switch from running $\ell_\infty$-descent on the real objective to running it on the quadratic Taylor approximation.  We show the evolution of the network output under the resulting trajectory.  Initially (left), the network output does not oscillate, indicating that the iterates are not oscillating in weight space.  On the other hand, once the dynamics are in the pre-\algname{EoS} oscillatory regime (middle), the network output oscillates but does not diverge.  Finally, once the dynamics are at \algname{EoS} (right), the network output diverges.

An interesting avenue for future work would be to understand why non-Euclidean \algname{GD} starts to oscillate when it does.

\begin{figure}[t]
    \centering
    \begin{tabular}{c}
    \hspace{-3mm}\includegraphics[width=0.9\linewidth]{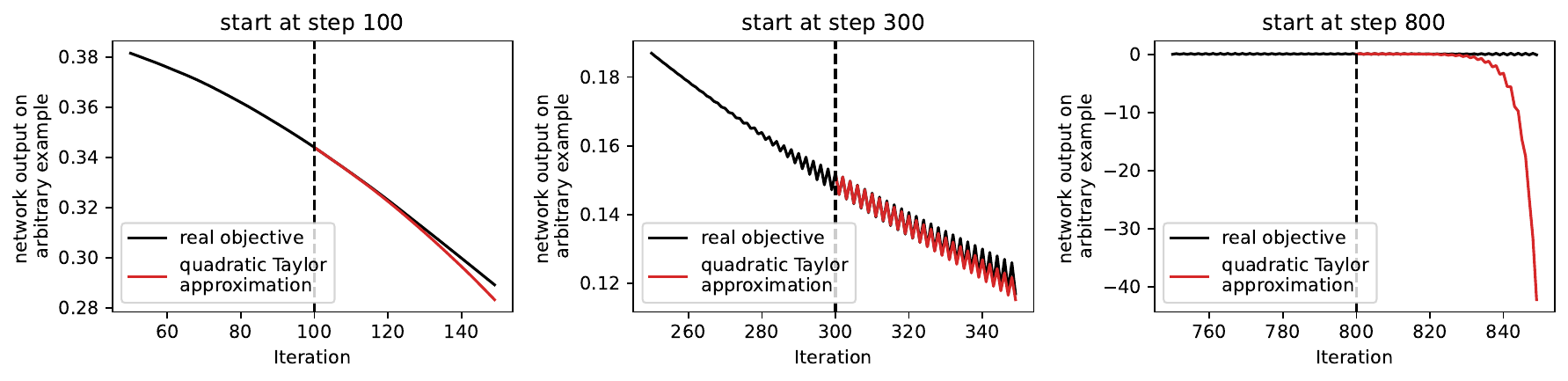} 
    \end{tabular}
    \caption{ \textbf{In the pre-\algname{EoS} oscillatory regime, training on the quadratic Taylor approximation oscillates without diverging.}  While training the network from Figure~\ref{fig:pre-eos}, we switch from training on the real objective to training on the quadratic Taylor approximation at three points during training: at step 100 (while the optimizer is stable and non-oscillatory), at step 300 (while the optimizer is in the pre-\algname{EoS} oscillatory regime), and at step 800 (when the network is at \algname{EoS}).  For these trajectories, we plot the network's output on an arbitrary test example.  In the first case, this output evolves smoothly; in the third case, it diverges; and, interestingly, in the second case, it oscillates with sustained magnitude and without diverging.  }
    \label{fig:pre-eos-run-quadratic}
\end{figure}

\section{The gap between the generalized sharpness and $2/\eta$}\label{sec:gap_generalized_sharpness_and_eta}
\begin{figure}[b]
    \centering
    \begin{tabular}{c}
    \hspace{-3mm}\includegraphics[width=0.9\linewidth]{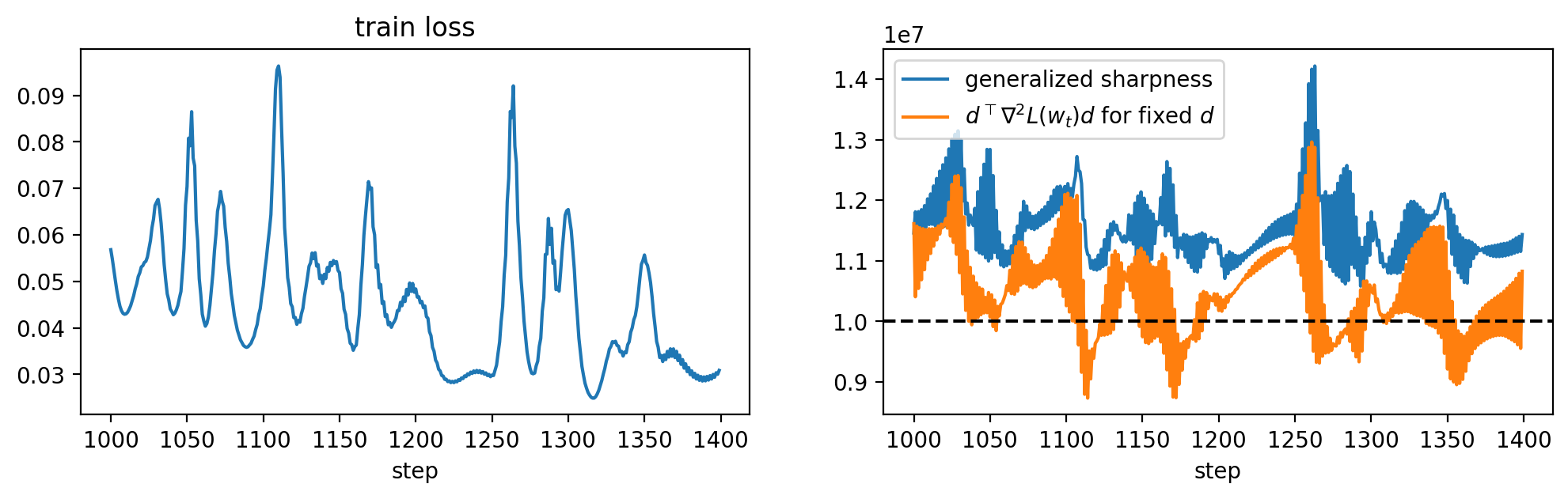} 
    \end{tabular}
    \caption{For a stretch of training, we plot both the (estimated) generalized sharpness $\max_{\| \bd \| \le 1} \, \bd^\top \nabla^2 \cL(\bw_t) \bd$ (blue), as well as the quadratic form ${\bd_*}^\top \nabla^2 \cL(\bw_t) {\bd_*}$ where $\bd_* \in \operatorname{argmax}_{\| \bd \| \le 1} \, \bd^\top \nabla^2 \cL(\bw_{t_0}) \bd$  is the maximizing direction at step $t_0 = 1000$.  While the first quantity is consistently larger than $2/\eta$, the second is much closer to $2/\eta$.  This is a fully-connected network trained on a subset of CIFAR-10 using MSE loss and $\ell_\infty$-descent with $\eta = 2\cdot 10^{-7}$.}
    \label{fig:gen-sharpness-vs-fixed-d}
\end{figure}
Prior studies of Euclidean \algname{GD} at \algname{EoS} have observed that there is often a gap between the sharpness and $2/\eta$; for example, in Figure 1 of \citet{cohen2021edgeofstability}, the sharpness can be seen to sometimes exceed the critical threshold of $2/\eta$ by 150\%.
Similar effects can be observed in plots throughout this paper for the generalized sharpness during non-Euclidean \algname{GD}.
We now review the prevailing explanation for this phenomenon for Euclidean \algname{GD}, and suggest that a similar mechanism is at play for non-Euclidean \algname{GD}.

For Euclidean \algname{GD}, \citet{cohen2025centralflow} argue that when multiple Hessian eigenvalues are near $2/\eta$, \algname{GD} should be conceived of as oscillating within the subspace spanned by the corresponding eigenvectors.
The \algname{EoS} phenomenon is that for every direction $\bd$ in this subspace, the local time-average of the directional curvature $\bd^\top \nabla^2 \cL(\bw) \bd$ is approximately equal to $2/\eta$.
Concretely, if at some iteration $t$, one computes the top Hessian eigenvector $\bd$, and then monitors the quantity $\bd^\top \nabla^2 \cL(\bw_{t+j}) \bd$ for the next $j=1, \ldots, m$ iterations, then the local time-average of this quantity $\frac{1}{m} \sum_{j=1}^m \bd^\top \nabla^2  \cL(\bw_{t+j}) \bd$ is predicted to be approximately $2/\eta$.
By contrast, if we compute the top Hessian eigenvalue anew at every iteration $\{\lambda_{\max}(\nabla^2 \cL(\bw_t))\}$, then due to the chaotic oscillatory dynamics, we get back a different vector within this subspace at every step, and because the largest Hessian eigenvector is the direction with the largest curvature, there is an upward bias.

For an analogy, consider the random $d$-dimensional matrix
\begin{align*}
    \mH := \mU [\frac{2}{\eta}\, \mI_k + \varepsilon \operatorname{diag}(\bz)] \mU^\top, \quad \bz \sim \cN(0, \mI_k),
\end{align*}
where $\mU \in \mathbb{R}^{d \times k}$ has orthogonal columns and $\epsilon > 0$ is a small number.
Here, $\mH$ is an analogy to the Hessian, the columns of $\mU$ are the $k\ge2$ unstable Hessian eigenvectors, and the random noise $\bz$ is an analogy to the chaotic oscillatory dynamics.
The nonzero eigenvalues of $H$ are exactly $\frac{2}{\eta} + \epsilon \, \bz$, and so the largest eigenvalue $\lambda_{\max}(\mH)$ is precisely $\frac{2}{\eta} + \epsilon \, \max_{1 \le i\le k} \, z_i$.
It can be shown that $\mathbb{E}[ \max_{1 \le i\le k} \, z_i] > 0$ provided that $k \ge 2$, and thus we have $\mathbb{E}[\lambda_{\max}(\mH)] > \frac{2}{\eta}$.
On the other hand, for any fixed vector $\bv \in \mathrm{Range}(\mU)$, we have that $\frac{\mathbb{E}[\bv^\top \mH \bv]}{\|\bv\|^2} = \frac{2}{\eta} $.

Generalizing this argument to the case of non-Euclidean \algname{GD} is nontrivial, as in the non-Euclidean case we do not yet know if there is an analogous concept to multiple eigenvalues being at the edge of stability.
Nevertheless, in Figure~\ref{fig:gen-sharpness-vs-fixed-d}, we empirically show that while the generalized sharpness \eqref{eq:sharpness-norm} hovers strictly above $2/\eta$, if we fix a timestep $t_0$ and compute the maximizer $\bd$ of the generalized sharpness problem \eqref{eq:sharpness-norm} at this timestep, then the quadratic form $\bd^\top \nabla^2 \cL(\bw_{t_0+j}) \, \bd$ computed over the next $j=1, \ldots, m$ steps is much closer to $2/\eta$.

\section{Training Details}

Our implementation is based on open source code from \citet{cohen2021edgeofstability} together with publicly available datasets. In all our experiments, we use algorithms with full-batch gradient, i.e., we run them in the deterministic setting. The datasets and step-sizes $\eta$ used in the experiments are specified in the figures. If not specified, we use the Frank-Wolfe algorithm with $M=5$ restarts and $K=50$ iterations, and \algname{PolarExpress} with 5 steps. 

In the training of CNN and MLP models, we use MSE loss, while in the training of the Transformer model, we use CE loss.

\section{Additional Experimental Results with $\ell_{\infty}$ Descent}

\subsection{Convergence When Training CNN Model}

In \Cref{fig:signgd_all_appendix}, we present additional results when training a CNN model on the CIFAR-10-5k dataset using $\ell_\infty$-descent. We observe a similar behavior that the generalized sharpness approximated by the \algname{FW} algorithm hovers at the stability threshold $2/\eta.$

\begin{figure}[!t]
    \centering
    \begin{tabular}{c}
         \hspace{-3mm}\includegraphics[width=1\linewidth]{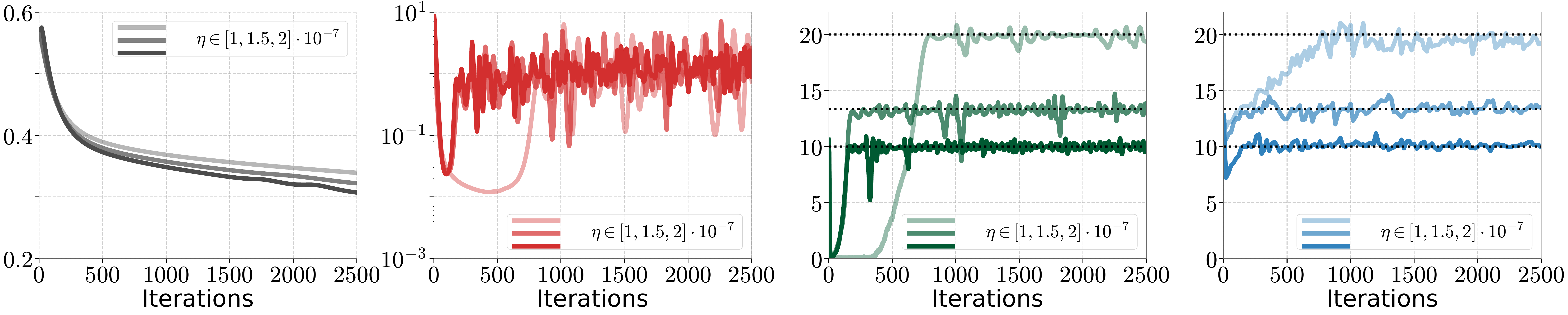}
    \end{tabular}
    
    \caption{($\ell_{\infty}$-descent) Train loss, gradient norm, directional smoothness, and generalized sharpness \eqref{eq:max-inner-norm} during training CNN on CIFAR10-5k with $\ell_{\infty}$-descent. Horizontal dashed lines correspond to the value $\nicefrac{2}{\eta}$. Gradient norm and train loss curves are smoothed using an exponential smoothing with $\alpha=0.1.$ We use \algname{FW} with $K=50$ and $M=5$ to approximate \eqref{eq:max-inner-norm}.}
    \label{fig:signgd_all_appendix}
\end{figure}

\begin{figure}
    \centering
    \begin{tabular}{c}
        \hspace{-3mm}\includegraphics[width=1\linewidth]{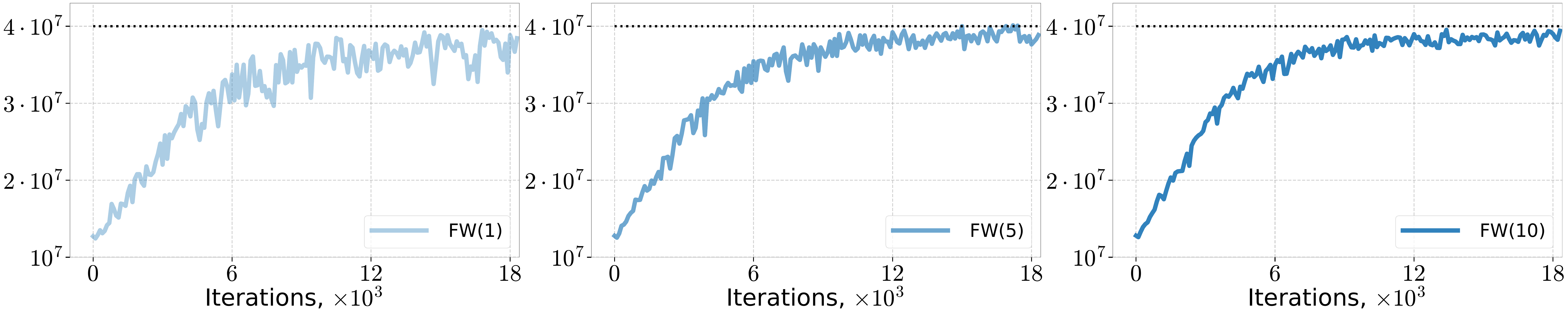} 
    \end{tabular}

    \caption{The approximation of the generalized sharpness of $\ell_{\infty}$-descent by the Frank-Wolfe algorithm varying the number of initialization points in $\{1,5,10\}$ for the Frank-Wolfe algorithm. Here, FW($k$) denotes $k$ restarts of the Frank-Wolfe algorithm, with varying initialization points.}
    \label{fig:signgd_FW_restarts}
\end{figure}

\subsection{Sensitivity of Frank-Wolfe Algorithm in Estimating the generalized sharpness for Sign Gradient Descent}

In this section, we study the sensitivity of the Frank-Wolfe algorithm in estimating the generalized sharpness of non-Euclidean gradient descent methods. Our experiments are conducted on a CNN with two convolutional layers, followed by a linear layer, trained on the CIFAR10-5k dataset \citep{krizhevsky2009learning}. We run $\ell_{\infty}$-descent, and approximate the generalized sharpness by Frank-Wolfe with $50$ iterations, using $\{1, 5, 10\}$ initialization points drawn from a standard normal distribution, and take the maximum over restarts as the generalized sharpness estimate.

In Figure~\ref{fig:signgd_FW_restarts}, we show that the Frank-Wolfe estimate of the generalized sharpness is sensitive to the number of restarts. With a single random initialization, the algorithm generally underestimates the value. Increasing the number of restarts to $15$ yields a much more stable estimate that closely aligns with the true value almost everywhere.

\subsection{Results on ResNet20 and VGG11}\label{sec:l_inf_gen_sharpness_vs_l2_sharpness}
In this section, we provide additional empirical results on larger models, such as ResNet20 \citep{he2016deep} and VGG11 \citep{simonyan2014very}, trained on the CIFAR-10 dataset with $\ell_{\infty}$-descent and MSE loss. From the results in Figure~\ref{fig:signgd_gener_sharpness_vs_l2_sharpness}, we observe that both directional smoothness and generalized sharpness hover at the stability threshold $2/\eta$. In contrast, a standard notion of sharpness, i.e., $\lambda_{\max}(\nabla^2\cL(\bw_t))$ defined in the Euclidean norm, lies significantly below the threshold (brown line in the right subfigure). Note that for the ResNet20 model, the generalized sharpness stabilizes slightly above the threshold due to several unstable directions as explained in Section~\ref{sec:gap_generalized_sharpness_and_eta}.

\begin{figure}[!t]
    \centering
    \begin{tabular}{c}
         \includegraphics[width=1\linewidth]{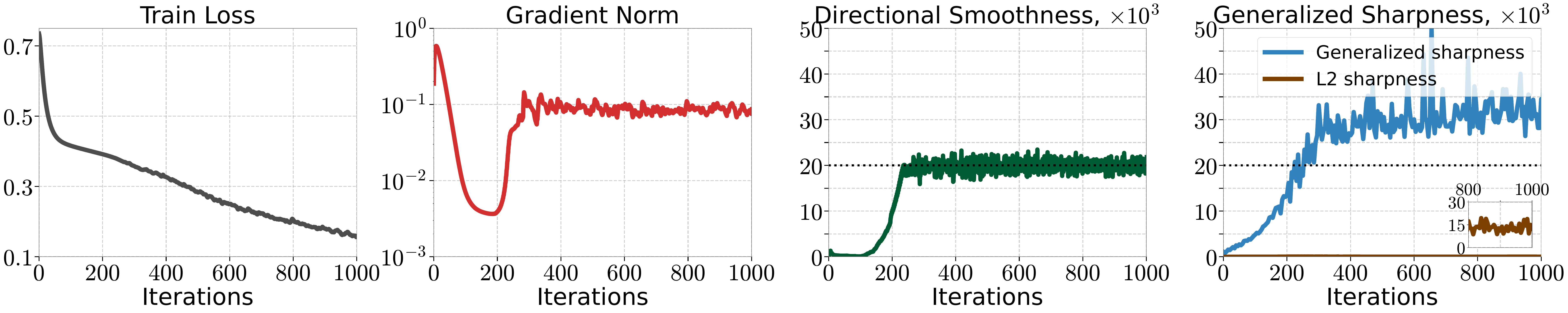} \\
         \includegraphics[width=1\linewidth]{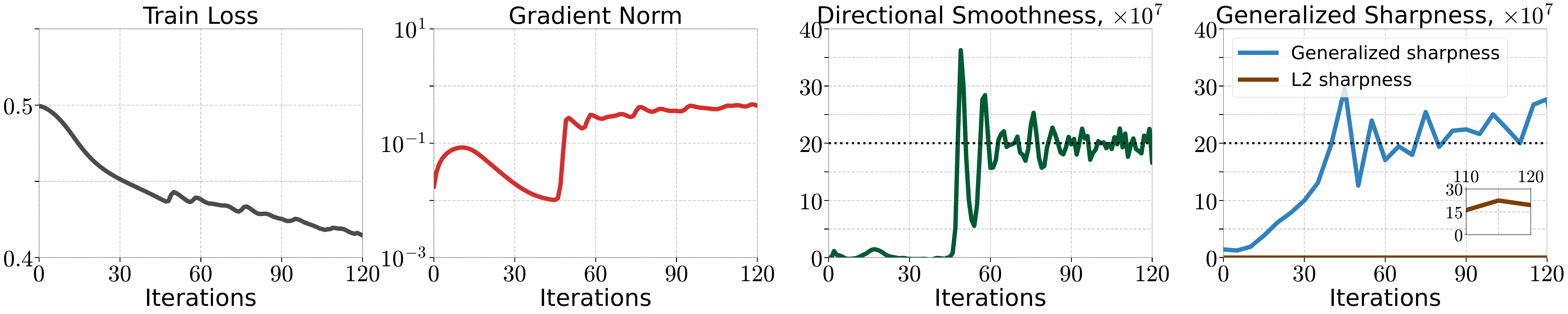}
    \end{tabular}
    \caption{($\ell_{\infty}$-descent) Train loss, gradient norm, directional smoothness, generalized sharpness \eqref{eq:max-inner-norm}, and L2 sharpness ($\lambda_{\max}(\nabla^2\cL(\bw_t))$) during training ResNet20 (top, $\eta=10^{-4}$) and VGG11 (bottom, $\eta=10^{-7}$) on CIFAR-10 with $\ell_{\infty}$-descent. Horizontal dashed lines correspond to the value $\nicefrac{2}{\eta}$.} 
    \label{fig:signgd_gener_sharpness_vs_l2_sharpness}
\end{figure}

\section{Additional Experimental Results with Block Gradient Descent}

\subsection{Sensitivity of Frank-Wolfe Algorithm in Estimating the Generalized Sharpness for Block Gradient Descent}

In this section, we study the sensitivity of the Frank-Wolfe algorithm in estimating the generalized sharpness of non-Euclidean gradient descent methods. Our experiments are conducted on a CNN with four convolutional layers, followed by a linear layer, trained on the CIFAR10-5k dataset \citep{krizhevsky2009learning}. Now we evaluate Block GD, where the generalized sharpness has a closed-form expression \eqref{eq:max_block_eigen}. We run Frank-Wolfe for $50$ iterations, using $\{1, 7, 15\}$ initialization points drawn from a standard normal distribution, and take the maximum over restarts as the generalized sharpness estimate. The Frank-Wolfe procedure is applied every $100$ iterations of \algname{Block CD}.

In Figure~\ref{fig:blockgd_true_vs_fw}, we show that the Frank-Wolfe estimate of the maximum block-wise Hessian eigenvalue is sensitive to the number of restarts. With a single random initialization, the algorithm provides a good approximation at a few iterations but generally underestimates the value. Increasing the number of restarts to $15$ yields a much more stable estimate that closely aligns with the true value almost everywhere.

\begin{figure}
    \centering
    \begin{tabular}{c}
        \hspace{-3mm}\includegraphics[width=1\linewidth]{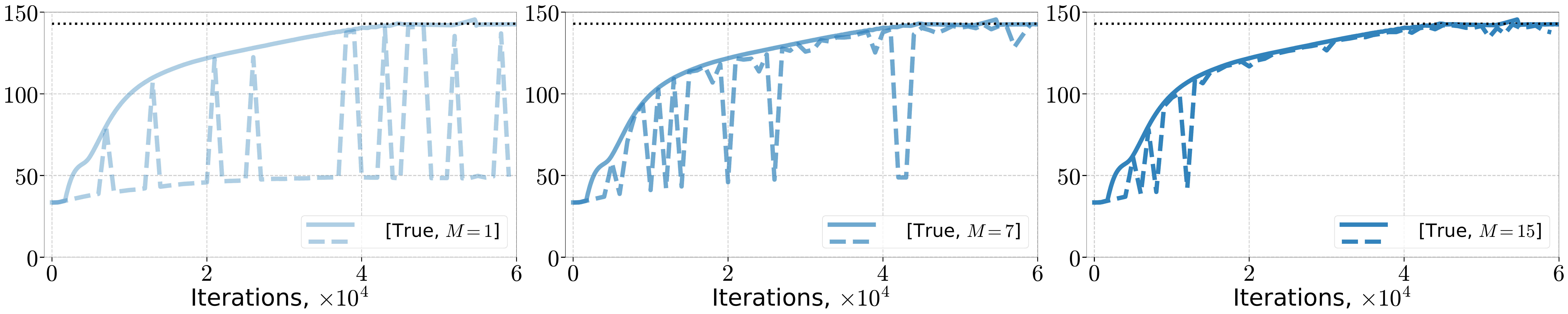} 
    \end{tabular}

    \caption{The maximum block-wise Hessian eigenvalue (solid line), which is the generalized sharpness of \algname{Block CD}, and its approximation by the Frank-Wolfe algorithm varying the number of initialization points in $\{1,7,15\}$ for the Frank-Wolfe algorithm. Here, $M$ is the number of restarts of the Frank-Wolfe algorithm, varying the initialization point.}
    \label{fig:blockgd_true_vs_fw}
\end{figure}

\section{Additional Experimental Results with Spectral Gradient Descent}

\subsection{Convergence When Training CNN Model}

In this section, we present the results when training CNN model on CIFAR-10 dataset with \algname{Spectral GD}; see Figure~\ref{fig:muon_all_app}. The results support our theoretical observations.

\begin{figure}[!t]
    \centering
    \begin{tabular}{c}
        \hspace{-3mm} \includegraphics[width=1\linewidth]{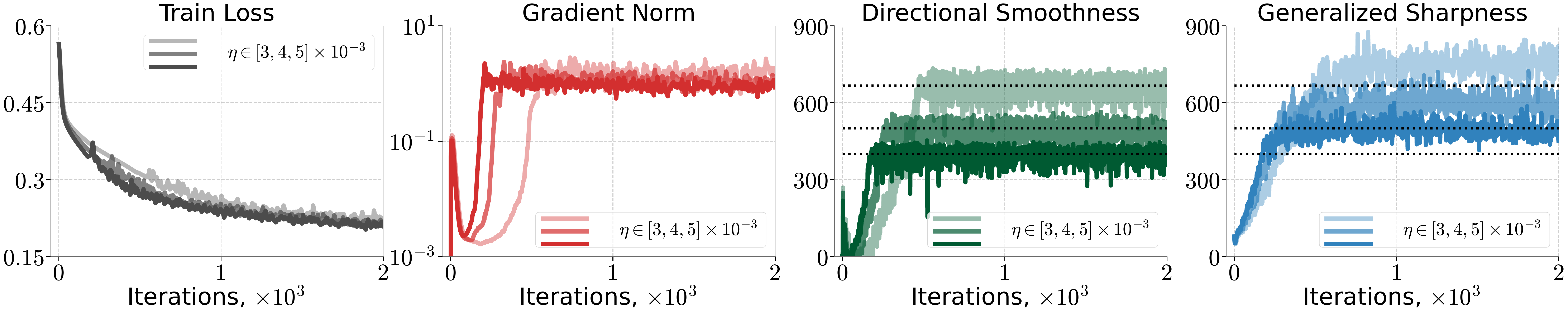}
    \end{tabular}
    \caption{(\algname{Spectral GD}) Train loss, gradient norm,  directional smoothness, and generalized sharpness \eqref{eq:sharpness-spectral} during training CNN model on CIFAR-10 dataset with the \algname{Spectral GD}. Horizontal dashed lines correspond to the value $\nicefrac{2}{\eta}$.}
    \label{fig:muon_all_app}
\end{figure}

\subsection{Sensitivity of Frank-Wolfe Algorithm in Estimating the Generalized Sharpness for Spectral Gradient Descent}

Next, we switch to the \algname{Spectral GD} to train CNN model on the full CIFAR-10 dataset. We perform a similar procedure to the one done in the previous section. We fix the number of Polar Express steps in both \algname{Spectral GD} and Frank-Wolfe to $5$ and vary the number of initialization points for Frank-Wolfe in $\{1, 5, 10\}.$ Each run of Frank-Wolfe has $50$ iterations.

In Figure~\ref{fig:muon_FW_restarts}, we observe that \algname{Spectral GD} is less sensitive to the number of initialization points for Frank-Wolfe than Block GD. Therefore, it is not necessary to do restarts for Frank-Wolfe when it is used to measure the generalized sharpness of the \algname{Spectral GD} algorithm. 

\begin{figure}
    \centering
    \begin{tabular}{c}
        \hspace{-3mm}\includegraphics[width=1\linewidth]{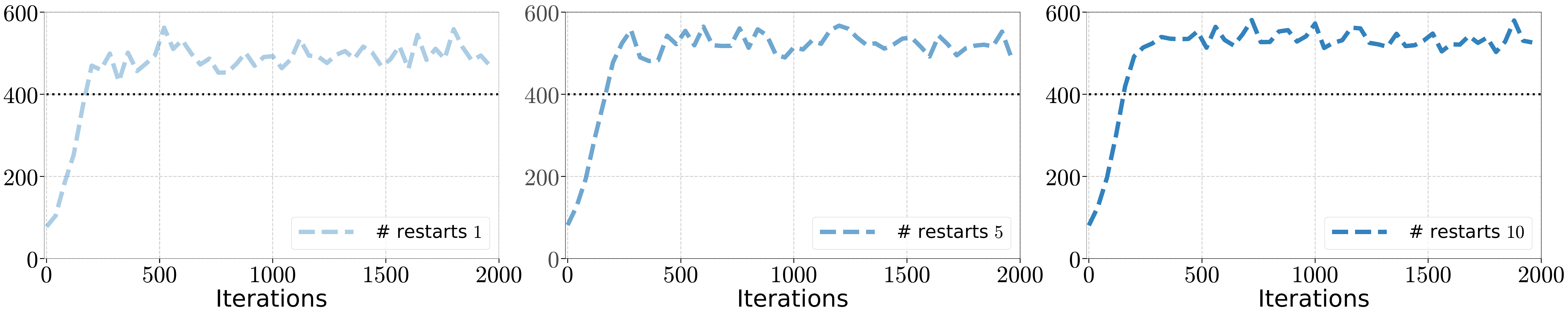} 
    \end{tabular}

    \caption{The approximation of the generalized sharpness by the Frank-Wolfe algorithm for \algname{Spectral GD} varying the number of initialization points in $\{1,5,10\}$ for the Frank-Wolfe algorithm.}
    \label{fig:muon_FW_restarts}
\end{figure}

\subsection{Sensitivity of Spectral Gradient Descent to the Number of Polar Express Steps}

We investigate how the number of Polar Express steps affects the generalized sharpness estimation of \algname{Spectral GD}. To this end, we fix the number of Polar Express steps in \algname{Spectral GD} and vary the number of steps in the Frank-Wolfe algorithm across $\{5, 10, 15\}$, and vice versa. All experiments are conducted using a CNN with four convolutional layers, trained on the full CIFAR-10 dataset.

As shown in Figure~\ref{fig:polar_express_steps}, we do not observe any significant differences across the different configurations. This indicates that $5$ steps of the Polar Express algorithm are sufficient to obtain an accurate and stable estimate of \algname{Spectral GD}’s generalized sharpness.

\begin{figure}
    \centering
    \begin{tabular}{c}
        \hspace{-3mm}\includegraphics[width=0.6\linewidth]{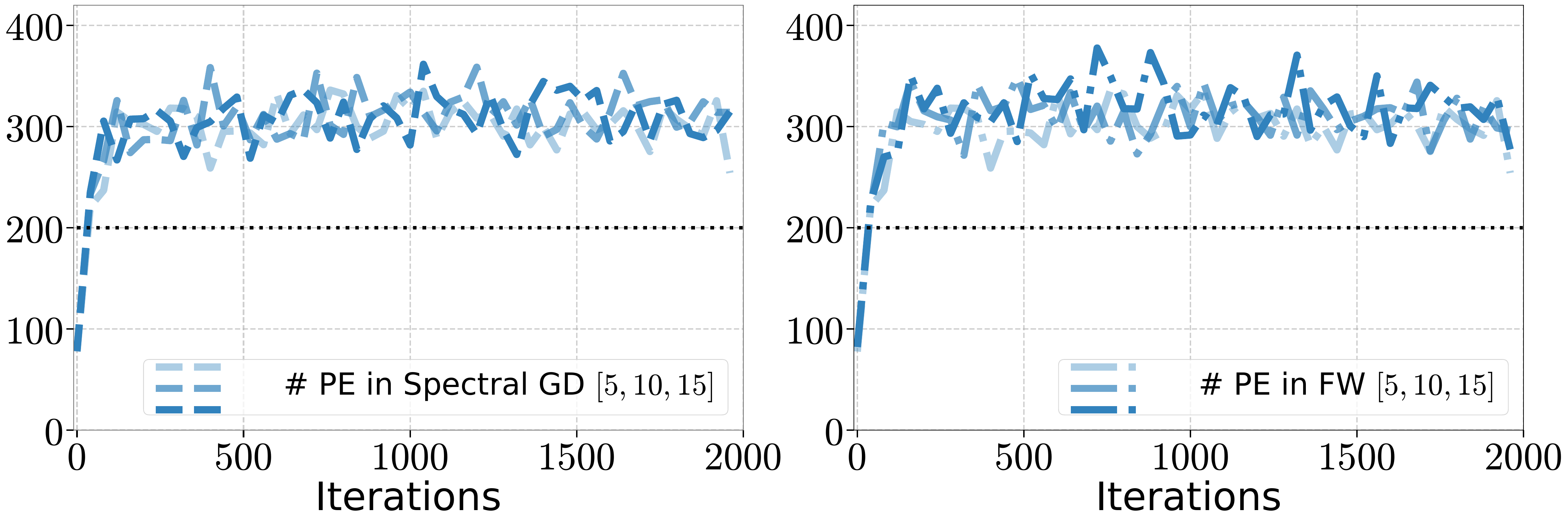} 
    \end{tabular}

    \caption{The sensitivity of the generalized sharpness estimation of \algname{Spectral GD} to the number of Polar Express steps in \algname{Spectral GD} (left) and in Frank-Wolfe (right). Here $\#$ PE means the number of Polar Express steps in \algname{Spectral GD} or Frank-Wolfe algorithm respectively.}
    \label{fig:polar_express_steps}
\end{figure}

\subsection{Quadratic Taylor Approximation of the Real Objective}

In this section, we provide additional results when training CNN model Figure~\ref{fig:muon_all} with \algname{Spectral GD}. At some iteration (indicated in Figure~\ref{fig:appendix_quadratic_approximation_muon}), we switch from running the algorithm on the real objective to its quadratic approximation at that point using exactly the same hyperparameters. We observe that during progressive sharpening phase (iterations 200 and 400), the dynamics on the quadratic loss approximate well those on the real objective. In opposite, the dynamics on the quadratic model when \algname{Spectral GD} is at \algname{EoS} already, the quadratic loss quickly diverges.

\begin{figure}[!t]
    \centering
    \begin{tabular}{c}
         \hspace{-3mm}\includegraphics[width=1\linewidth]{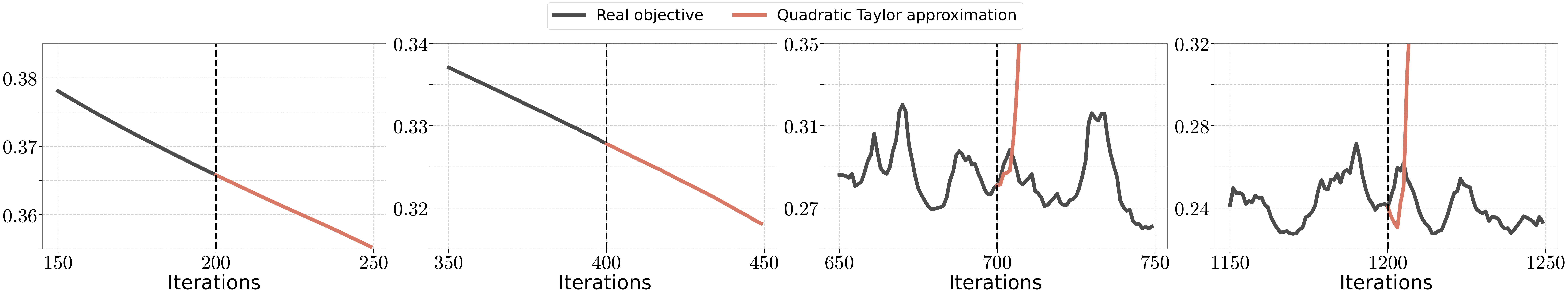}  \\ 

         \hspace{-3mm}\includegraphics[width=1\linewidth]{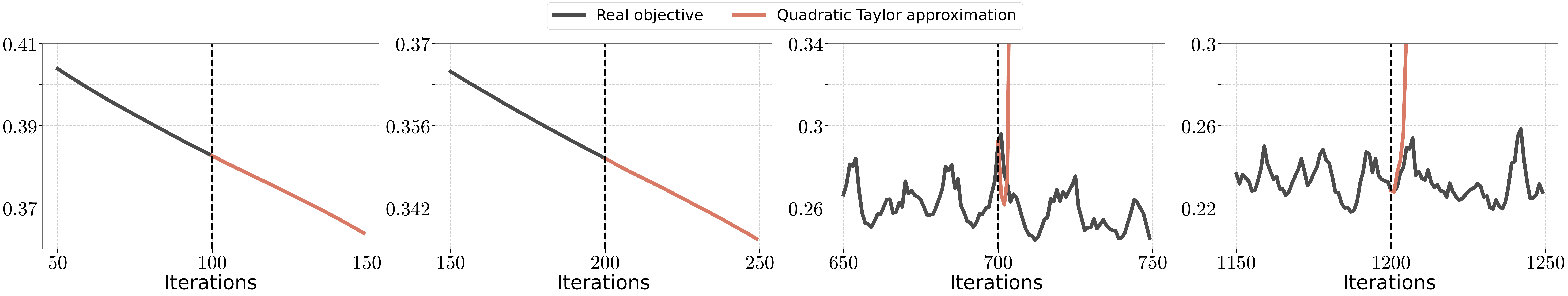} 
    \end{tabular}
    
    \caption{MSE loss (top row $\eta=0.003$, bottom row $\eta=0.004$). At 4 different iterations during the training of the CNN from Figure~\ref{fig:muon_all} (marked by the vertical dotted black lines), we switch from running \algname{Spectral GD} on the real neural training objective (for which the train loss is plotted in gray) to running \algname{Spectral GD} on the quadratic Taylor approximation around the current iterate (for which the
train loss is plotted in orange). Two left figures are timesteps before \algname{Spectral GD} has entered \algname{EoS}; observe that the orange line (Taylor approximation) closely tracks the blue line (real objective). Two right figures are timesteps during the \algname{EoS}; observe that the orange line quickly diverges, whereas the blue line does not.}
    \label{fig:appendix_quadratic_approximation_muon}
\end{figure}

\subsection{Results on ResNet20 and VGG11}\label{sec:spectral_gd_gen_sharpness_vs_l2_sharpness}
In this section, we provide additional empirical results on larger models, including ResNet20 \citep{he2016deep} and VGG11 \citep{simonyan2014very}, trained on the CIFAR-10 dataset using \algname{Spectral GD} with MSE loss. As shown in Figure~\ref{fig:spectral_gd_gener_sharpness_vs_l2_sharpness}, both the directional smoothness and the generalized sharpness remain close to the stability threshold $2/\eta$. In contrast, the standard notion of sharpness--namely $\lambda_{\max}(\nabla^2\cL(\bw_t))$ computed in the Euclidean norm--stays well below this threshold (brown curve in the right panel). For the ResNet20 model, the generalized sharpness stabilizes slightly above $2/\eta$, which can be attributed to the presence of several unstable directions, as discussed in Section~\ref{sec:gap_generalized_sharpness_and_eta}.

\begin{figure}[!t]
    \centering
    \begin{tabular}{c}
         \includegraphics[width=1\linewidth]{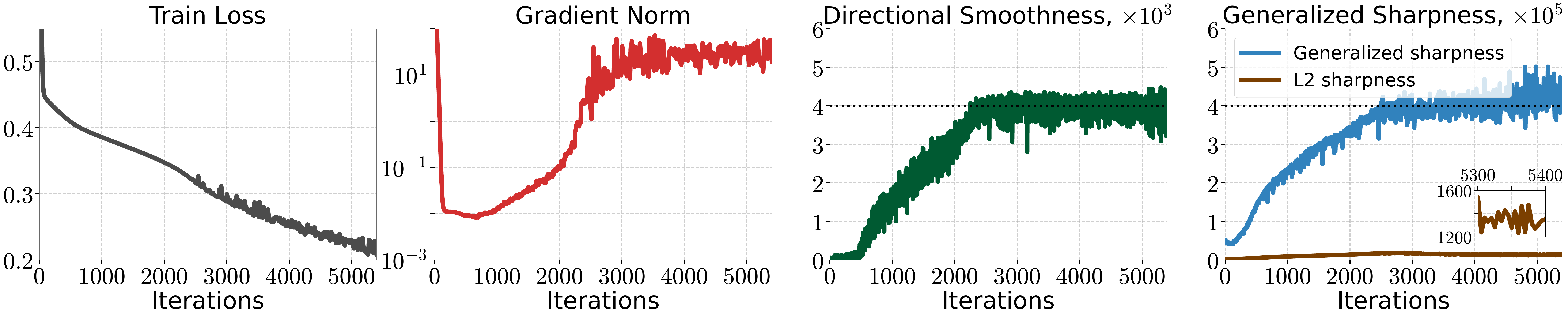} \\
         \includegraphics[width=1\linewidth]{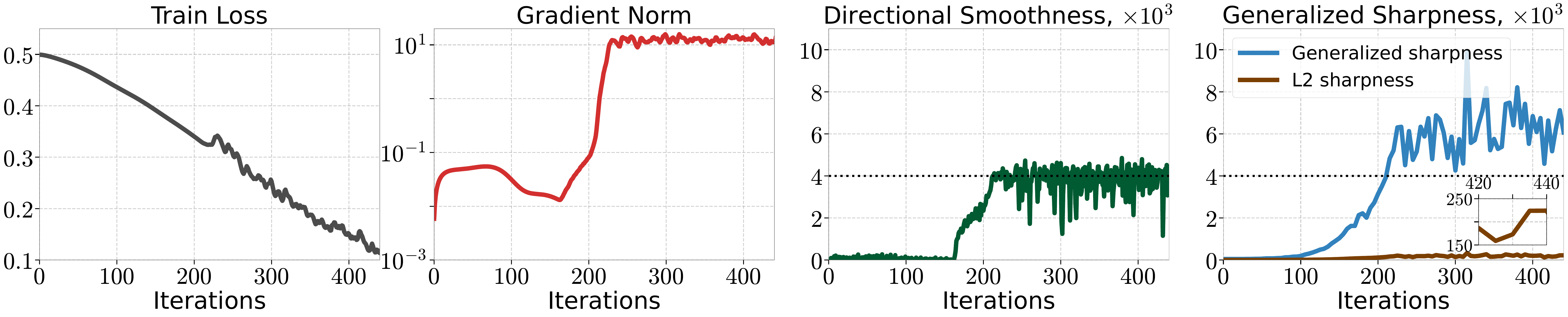}
    \end{tabular}
    \caption{(\algname{Spectral GD}) Train loss, gradient norm, directional smoothness, generalized sharpness \eqref{eq:sharpness-spectral}, and L2 sharpness ($\lambda_{\max}(\nabla^2\cL(\bw_t))$) during training ResNet20 (top, $\eta=5\cdot 10^{-5}$) and VGG11 (bottom, $\eta=5\cdot 10^{-4}$) on CIFAR-10 with \algname{Spectral GD}. Horizontal dashed lines correspond to the value $\nicefrac{2}{\eta}$.} 
    \label{fig:spectral_gd_gener_sharpness_vs_l2_sharpness}
\end{figure}

\section{$\ell_{\infty}$-descent and \algname{RMSprop}}
In this section, we report results for the \algname{RMSprop} algorithm when training an MLP on the CIFAR10-5k subset with MSE loss. Although \algname{SignGD} can be viewed as a limiting case of \algname{RMSprop} as $\beta_2\to 0$, the adaptive \algname{EoS} (\algname{AEoS}) condition of \citet{cohen2022adaptive} is valid only when $\beta_2$ is large (i.e., close to 1 in practical settings) and breaks down as $\beta_2$ becomes small. For small $\beta_2$, the largest eigenvalue of the preconditioned Hessian $\lambda_{\max}(\mP^{-1}_t\nabla^2\cL(\bw_t))$ does not stabilize around $2/\eta$; instead, it often exceeds this value by a substantial margin. The underlying issue is that as $\beta_2\to 0$, the algorithm no longer resembles preconditioned gradient descent with a slowly-changing preconditioner, which is the approximation that inspires the \algname{AEoS} condition.

Our results in Figure~\ref{fig:rmsprop} support this observation. We plot the top four eigenvalues of the preconditioned Hessian for \algname{RMSprop}, showing that they stabilize around the threshold $2/\eta$ only when $\beta_2$ is large, while for small $\beta_2$ the behavior deviates significantly.

\begin{figure}[!h]
    \centering
    \begin{tabular}{c}
         \hspace{-3mm}\includegraphics[width=1\linewidth]{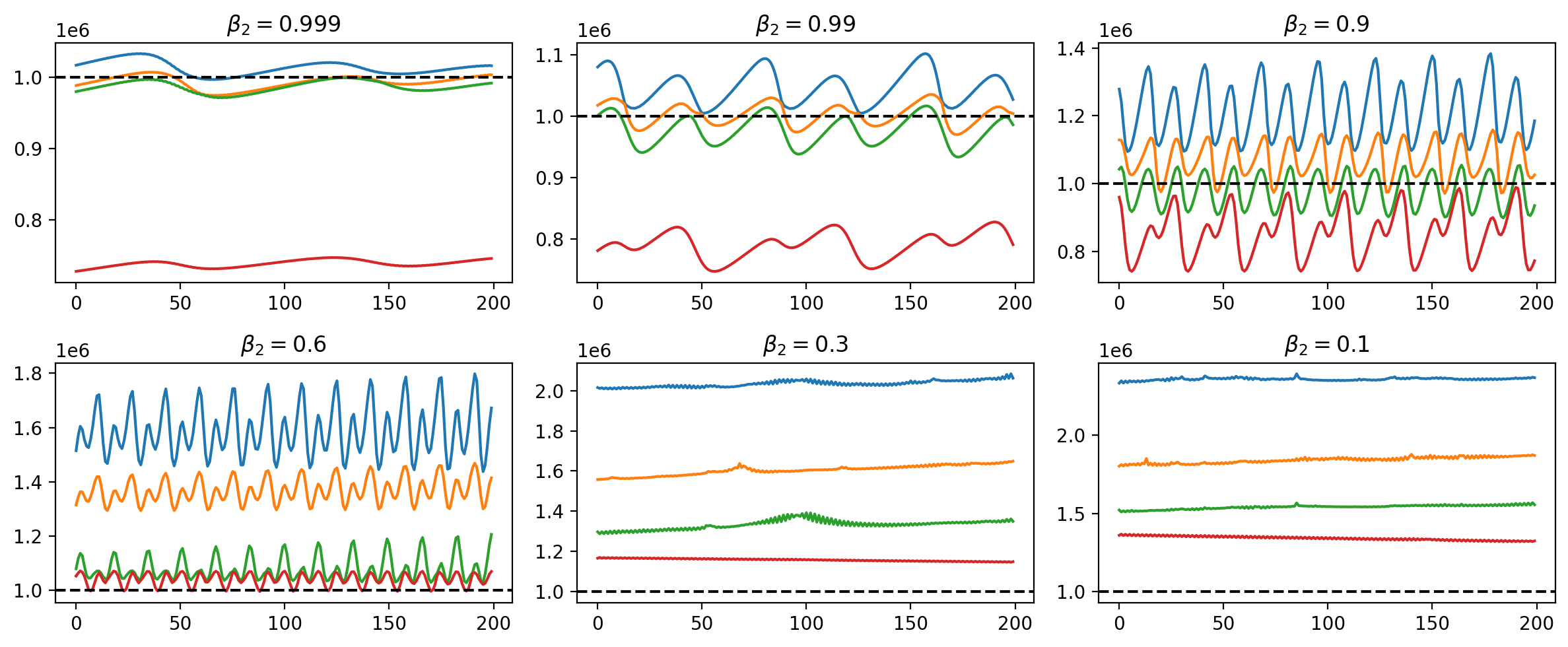}  
    \end{tabular}
    
    \caption{Sharpness of \algname{RMSprop} when training MLP model on a subset of CIFAR-10 dataset, varying $\beta_2$ hyperparameter. Here, colored lines correspond to the evolution of the top-4 largest eigenvalues of the preconditioned Hessian, while the dashed line is $2/\eta$ threshold. We observe that \algname{RMSprop} reaches \algname{AEoS} only for realistic (close to 1) values of $\beta_2$, while for small $\beta_2$ the preconditioned sharpness is not at $2/\eta$, but significantly higher. }
    \label{fig:rmsprop}
\end{figure}

\section{Useful Lemmas}

\subsection{Missing Proofs for the Spectral Block Norm $\ell_{\infty,2}$}

First, we derive the step of \algname{Spectral GD}.
\begin{lemma}\label{lem:dualpdatesum}
Let $\|\mX^{\ell}\|_{\cW_{\ell}}$ be the norm of the $\ell$-th layer and $\|\mX\|^2 = \sum_{\ell=1}^L\|\mX^{\ell}\|_{\cW_\ell}^2$. The solution to 
\begin{align}
    \DW_* = \argmin{\DW} \trace{\DW^\top\mG}+ \frac{1}{2\eta} \| \DW\|^2.
\end{align}
is given by
\begin{equation}
    \DW^{\ell}_* = \eta \cdot \|\mG^{\ell}\|_{\cW_{\ell}}^* \cdot \argmin{\|\mX\|_{\cW_{\ell}} =1}\trace{\mX^\top\mG^{\ell}}
\end{equation}
where $\norm{\cdot}_{\cW_{\ell}}^*$ denotes the dual norm of $\|\cdot \|_{\cW_{\ell}}.$
\end{lemma}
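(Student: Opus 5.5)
The plan is to use the fact that the objective is separable across layers and then solve each layer's subproblem by a polar decomposition of the decision variable into magnitude times direction. Since $\trace{\DW^\top\mG} = \sum_{\ell=1}^L \trace{(\DW^{\ell})^\top\mG^{\ell}}$ and $\|\DW\|^2 = \sum_{\ell=1}^L \|\DW^{\ell}\|_{\cW_\ell}^2$, the objective is a sum of $L$ terms. Each term depends only on the single block $\DW^\ell$. Hence $\DW_*$ is a minimizer if and only if each block $\DW^\ell_*$ minimizes
\[
\phi_\ell(\mX) \coloneqq \trace{\mX^\top\mG^\ell} + \frac{1}{2\eta}\|\mX\|_{\cW_\ell}^2 .
\]
This reduces the lemma to a single-norm statement, essentially the computation behind \eqref{eq:gradmethod} in Definition~\ref{def:gradmethod}, but with the sign convention of an $\argmin$.

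For a fixed layer, I would write any $\mX$ as $\mX = r\,\mU$ with $r = \|\mX\|_{\cW_\ell}\ge 0$ and $\|\mU\|_{\cW_\ell}=1$. The case $\mX=0$ is included as $r=0$. Then $\phi_\ell(r\mU) = r\,\trace{\mU^\top\mG^\ell} + \frac{r^2}{2\eta}$. For fixed $r>0$, minimizing over $\mU$ gives $\min_{\|\mU\|_{\cW_\ell}=1}\trace{\mU^\top\mG^\ell} = -\|\mG^\ell\|^*_{\cW_\ell}$. This follows by replacing $\mU$ with $-\mU$ in the definition \eqref{def:dualnorm_dualgrad} of the dual norm, which is legitimate because the unit sphere is symmetric. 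The minimum is attained by compactness of the unit sphere in finite dimensions. Because the inner minimization over directions does not depend on $r$, the joint minimum is
\[
\min_{r\ge 0}\Big(-r\,\|\mG^\ell\|^*_{\cW_\ell} + \frac{r^2}{2\eta}\Big),
\]
which is a convex scalar quadratic minimized at $r^\star = \eta\,\|\mG^\ell\|^*_{\cW_\ell}$. Combining the two gives $\DW^\ell_* = \eta\,\|\mG^\ell\|^*_{\cW_\ell}\,\mU^\ell_*$ with $\mU^\ell_*\in\argmin{\|\mX\|_{\cW_\ell}=1}\trace{\mX^\top\mG^\ell}$, which is the claimed formula.

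The main point requiring care is rigor about the decoupling of $r$ and $\mU$. I would justify it by the chain of inequalities $\phi_\ell(r\mU)\ge -r\|\mG^\ell\|^*_{\cW_\ell}+\frac{r^2}{2\eta}\ge \phi_\ell(r^\star\mU^\ell_*)$, valid for every feasible pair $(r,\mU)$. This shows the proposed point is a global minimizer without any differentiability of the norm.

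I would also state two remarks explicitly. First, the $\argmin$ may be set-valued, as for the spectral norm when singular values repeat, so the formula should be read as a selection, consistent with ``$\in$'' in Definition~\ref{def:gradmethod}. Second, the degenerate case $\mG^\ell=0$ gives $r^\star=0$ and $\DW^\ell_*=0$, consistent with the formula for any choice of direction.

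Finally, I would note that with the spectral norm on each block the direction $\mU^\ell_*$ is $-\mU^\ell_t\mV^\ell_t$, the negative polar factor. This connects the lemma to the \algname{Spectral GD} update stated in the main text.
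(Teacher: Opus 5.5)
Your proposal is correct and follows the paper's proof essentially step for step: you separate the objective across layers, write each block as $c\,\mX$ with $c\ge 0$ and $\|\mX\|_{\cW_\ell}=1$, use $\min_{\|\mX\|_{\cW_\ell}=1}\trace{\mX^\top\mG^\ell}=-\|\mG^\ell\|^*_{\cW_\ell}$, and minimize the resulting scalar quadratic to get $c=\eta\|\mG^\ell\|^*_{\cW_\ell}$; your explicit inequality chain and your handling of the $r=0$ and set-valued cases just make the paper's argument more careful. One small caveat on your closing remark: the main-text \algname{Spectral GD} update, with its common factor $\sum_{j} \trace{\mS^j_t}$, comes from the max-aggregated norm of Lemma~\ref{lem:dualpdatemax}, not from this sum-of-squares norm, which would instead scale each layer by its own nuclear norm---only the polar-factor direction is shared.
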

\begin{proof}
First, note that this problem is separable over each layer since
\begin{align*}
     \trace{\DW^\top\mG}+ \frac{1}{2\eta} \| \mW\|^2 
    &= \sum_{\ell=1}^L \left(\trace{(\DW^{\ell})^\top\mG^{\ell}}+ \frac{1}{2\eta} \| \DW^{\ell}\|_{\cW_{\ell}}^2\right). 
\end{align*}
Thus, we can solve over each layer separately. Changing coordinates with $\DW^{\ell} = c \mX$ where $\|\mX\|_{\cW_{\ell}} =1$ and $c \geq 0$ we have that 
\begin{align*}
  \min_{\DW^{\ell}}  \trace{(\DW^{\ell})^\top\mG^{\ell}}+ \frac{1}{2\eta} \| \DW^{\ell}\|_{\cW_{\ell}}^2 &=  \min_{c \geq 0}c \min_{\|\mX\|_{\cW_{\ell}} =1} \trace{\mX^\top\mG^{\ell}}+ \frac{1}{2\eta}c^2 \\
  &= \min_{c \geq 0}-c \|\mG^{\ell}\|_{\cW_{\ell}}^*+ \frac{1}{2\eta}c^2. 
\end{align*}
Here, we use the fact that $\argmin{\mX}{\rm tr}(\mX^\top \mG^{\ell}) = -(\mG^{\ell})^*$ is the dual matrix of $\mG^{\ell}$. Finally solving in $c\geq 0$ gives $c = \eta \cdot \|\mG_{\ell}\|_{\cW_{\ell}}^*.$
 
\end{proof}

If we use the infinity norm over layers instead of the Euclidean one, we get the following result. 
\begin{lemma} \label{lem:dualpdatemax}
The solution to 
    \begin{align}
    \DW_* = \argmin{\DW} \trace{\DW^\top\mG_t}+ \frac{1}{2\eta}\max_{\ell \in [L]}  \|\DW^{\ell}\|_{\cW_{\ell}}^2.
\end{align}
is given by
\begin{equation}
    \DW^{\ell}_* =  \eta \gamma \cdot \argmin{\|\mX\|_{\cW_{\ell}} =1}\trace{\mX^\top\mG^{\ell}_t}
\end{equation}
where $\gamma \coloneqq \sum_{\ell=1}^L \|\mG^{\ell}_t\|_{\cW_{\ell}}^*$ and $\norm{\cdot}_{\cW_{\ell}}^*$ denotes the dual norm of $\|\cdot \|_{\cW_{\ell}}.$
\end{lemma}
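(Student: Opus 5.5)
Since the objective is not separable (the max couples the layers), I will reduce it to a two-level problem: an outer choice of a common radius and inner per-layer linear problems. Write each block as $\DW^{\ell} = c_\ell \mX^{\ell}$ with $c_\ell \ge 0$ and $\|\mX^{\ell}\|_{\cW_\ell}=1$. Then $\max_{\ell}\|\DW^{\ell}\|_{\cW_\ell}^2 = (\max_\ell c_\ell)^2$. For fixed $c_\ell$, the inner minimization over $\mX^{\ell}$ decouples across layers exactly as in the proof of Lemma~\ref{lem:dualpdatesum}:
\[
\min_{\|\mX^{\ell}\|_{\cW_\ell}=1}\trace{(\mX^{\ell})^\top\mG^{\ell}_t} = -\|\mG^{\ell}_t\|_{\cW_\ell}^*,
\]
attained at $\argmin{\|\mX\|_{\cW_{\ell}} =1}\trace{\mX^\top\mG^{\ell}_t}$. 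The problem therefore reduces to the scalar problem
\[
\min_{c_1,\dots,c_L\ge 0}\; -\sum_{\ell=1}^L c_\ell \|\mG^{\ell}_t\|_{\cW_\ell}^* + \frac{1}{2\eta}\Big(\max_{\ell} c_\ell\Big)^2 .
\]

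Next, I fix $c := \max_\ell c_\ell$. Since every coefficient $\|\mG^{\ell}_t\|_{\cW_\ell}^*$ is nonnegative, the linear term is minimized by pushing each $c_\ell$ up to its upper bound, so $c_\ell = c$ for all $\ell$. If some dual norm vanishes, any $c_\ell\in[0,c]$ is optimal, and choosing $c_\ell = c$ still yields a minimizer of the stated form. This collapses the problem to $\min_{c\ge0} -c\gamma + \frac{1}{2\eta}c^2$ with $\gamma = \sum_\ell \|\mG^{\ell}_t\|_{\cW_\ell}^*$, whose minimizer is $c=\eta\gamma$. Substituting back gives $\DW^{\ell}_* = \eta\gamma\,\argmin{\|\mX\|_{\cW_{\ell}}=1}\trace{\mX^\top\mG^{\ell}_t}$, which is the claimed formula.

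The main subtle point is justifying the reduction rigorously. The map $(c_\ell,\mX^{\ell})\mapsto \DW^{\ell}$ covers all of the matrix space; a zero block corresponds to $c_\ell=0$ with arbitrary $\mX^{\ell}$. Because of this, minimizing over the parametrization is equivalent to minimizing over $\DW$. The other point to handle carefully is the "push to the max" monotonicity argument, including the degenerate case where some dual norms are zero. In that case the argmin is a set, and the lemma should be read as exhibiting one element of it, in the same way as Definition~\ref{def:gradmethod} uses $\in\argmin{}$.
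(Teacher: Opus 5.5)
Your proof is correct and is essentially the paper's argument. Both introduce the common radius $r=\max_{\ell}\|\DW^{\ell}\|_{\cW_\ell}$ as an outer scalar variable. Both then solve the now-separable per-layer linear problems to get $-r\|\mG^{\ell}_t\|_{\cW_\ell}^*$, and minimize the one-dimensional quadratic $-r\gamma + r^2/(2\eta)$ to obtain $r=\eta\gamma$. The only difference is cosmetic: the paper minimizes each block directly over the ball $\|\DW^\ell\|_{\cW_\ell}\le r$, whereas your polar parametrization $c_\ell \mX^\ell$ needs the extra ``push $c_\ell$ up to $c$'' monotonicity step, which you justify correctly, including the zero-dual-norm case that the paper does not address.
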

\begin{remark}
    If $\|\cdot \|_{\cW_{\ell}} = \|\cdot \|_2$ for all $\ell\in[L],$ then $\DW^{\ell} = \eta\gamma\mU^{\ell}_t\mV^{\ell}_t$ where $\mG^{\ell}_t = \mU^{\ell}_t\mS^{\ell}_t\mV^{\ell}_t$ is the reduced SVD decomposition. Moreover, $\gamma=\sum_{\ell=1}^L\|\mG_t^{\ell}\|_*$ is the sum of nuclear norms over the layers. See the proof in \citep{bernstein2024old}. 
\end{remark}

\begin{proof}
    The problem that we want to solve is 
    \[
    \min\limits_{\DW}\sum_{\ell=1}^L\trace{(\DW^{\ell})^\top \mG^\ell_t} + \frac{1}{2\eta}\max_{\ell\in[L]}\|\DW^\ell\|^2_{\cW_{\ell}};
    \]
    Let $\cS \coloneqq \{\DW\mid \|\DW^{\ell}\|_{\cW_{\ell}} \le t~\forall \ell\in[L]\}$. We can rewrite this problem as 
    \begin{align*}
    &\min_{t\ge 0}\min_{\DW\in\cS}\left[\sum_{\ell=1}^{L}\trace{(\DW^{\ell})^\top\mG^{\ell}_t} + \frac{1}{2\eta}\|\DW^{\ell}\|_{\cW_{\ell}}^2\right]
    = \min_{t\ge 0}\min_{\DW\in\cS}\left[\sum_{\ell=1}^{L}\trace{(\DW^{\ell})^\top\mG^{\ell}_t} + \frac{t^2}{2\eta}\right]\\
    =\;& \min_{t\ge 0}\left[\sum_{\ell=1}^{L}\min_{\|\DW^\ell\|_{\cW_{\ell}}\le t}\trace{(\DW^{\ell})^\top\mG^{\ell}_t} + \frac{t^2}{2\eta}\right]
    = \min_{t\ge 0}\left[\sum_{\ell=1}^{L}-t\max_{\|\DW^\ell\|_{\cW_{\ell}}\le 1}\trace{(\DW^{\ell})^\top\mG^{\ell}_t} + \frac{t^2}{2\eta}\right]\\
    =\;& \min_{t\ge 0}\left[\sum_{\ell=1}^{L}-t\|\mG^{\ell}_t\|_{\cW_\ell}^* + \frac{t^2}{2\eta}\right].
    \end{align*}
    Now it is a quadratic problem in $t$. The minimizer $t_*$ is given by 
    \[
    t_* \coloneqq \eta\sum_{\ell=1}^L\|\mG^{\ell}_t\|_{\cW_\ell}^*.
    \]
    Therefore, the final solution is given by 
    \[
    \DW^{\ell} = \eta \left(\sum_{\ell=1}^L \|\mG^{\ell}_t\|^*_{\cW_{\ell}}\right)\argmin{\|\mX\|_{\cW_{\ell}}\le 1}\trace{\mX^\top \mG^{\ell}_t}.
    \]

\end{proof}

\begin{lemma}\label{lem:FW_spectral}
    Let $\norm{\cdot}$ be the spectral block norm $\norm{\cdot}_{2\rightarrow2}$. Then the iterates of the \algname{FW} to approximate \eqref{eq:sharpness-spectral} are given by
    \[
        \mU^{\ell}_k\mV^{\ell}_k = {\rm polar}(\nabla_{\mW^{\ell}}F(\mD_k)), \quad \mD_{k+1}^{\ell} = (1-\gamma_k)\mD_k + \gamma_k\mU^{\ell}_k\mV^{\ell}_k,
    \]
    where ${\rm polar}(\cdot)$ is the polar decomposition of a matrix, $\gamma_k=\frac{2}{2+k}$
\end{lemma}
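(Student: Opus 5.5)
We specialize the generic Frank-Wolfe template of Algorithm~\ref{alg:FrankWolfe} to the objective $F(\mD)\coloneqq \dotprod{\mD,\nabla^2\cL(\mW)[\mD]}$ over the unit ball of $\norm{\cdot}_{\infty,2}$. Two things must be computed: the gradient of $F$, and the linear maximization oracle over this ball.

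\textbf{Gradient.} Since $\nabla^2\cL(\mW)[\cdot]$ is a self-adjoint linear operator with respect to the trace inner product (symmetry of second derivatives), $F$ is a quadratic form. Hence $\nabla F(\mD)=2\nabla^2\cL(\mW)[\mD]$. Its $\ell$-th block $\nabla_{\mW^\ell}F(\mD_k)$ is a positive multiple of the $\ell$-th block of the Hessian-vector product. The factor $2$ will not matter, because the oracle output is invariant to positive scaling.

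\textbf{Oracle and separability.} The FW oracle is $\mV_k\in\argmax_{\|\mV\|_{\infty,2}\le1}\dotprod{\nabla F(\mD_k),\mV}$. Because the norm is a maximum over layers, the unit ball is the Cartesian product of the per-layer spectral-norm balls $\{\|\mV^\ell\|_2\le 1\}$. The linear objective splits as $\sum_\ell \trace{(\mV^\ell)^\top\nabla_{\mW^\ell}F(\mD_k)}$, so the maximization decouples into $L$ independent problems $\max_{\|\mV^\ell\|_2\le1}\trace{(\mV^\ell)^\top\mG^\ell}$ with $\mG^\ell=\nabla_{\mW^\ell}F(\mD_k)$. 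This is the same decoupling used in the proof of Lemma~\ref{lem:dualpdatemax}, specialized to a linear objective with no quadratic penalty.

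\textbf{Per-layer maximizer.} Write the reduced SVD $\mG^\ell=\mU\mS\mV^\top$. By von Neumann's trace inequality, $\trace{\mX^\top\mG^\ell}\le\sum_i\sigma_i(\mX)\sigma_i(\mG^\ell)\le\|\mX\|_2\,\|\mG^\ell\|_*$. Taking $\mX=\mU\mV^\top={\rm polar}(\mG^\ell)$ gives $\trace{\mV\mU^\top\mU\mS\mV^\top}=\trace{\mS}=\|\mG^\ell\|_*$, so equality holds and the polar factor is a maximizer; this matches the Remark after Lemma~\ref{lem:dualpdatemax}. Substituting into the FW update $\mD_{k+1}=(1-\gamma_k)\mD_k+\gamma_k\mV_k$, block by block with $\gamma_k=2/(2+k)$, gives the stated iteration.

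\textbf{Expected difficulty.} The only delicate point is non-uniqueness when $\mG^\ell$ is rank-deficient: the oracle's argmax is then a set, and the reduced-SVD polar factor is one valid selection. I will note this explicitly, since FW only requires some maximizer. Everything else is routine.
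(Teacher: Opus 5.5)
Your proposal is correct and follows the paper's proof essentially step for step. You compute $\nabla F(\mD)=2\nabla^2\cL(\mW)[\mD]$, note that the linear maximization over the $\norm{\cdot}_{\infty,2}$ unit ball separates into independent per-layer problems over spectral-norm balls, and take the polar factor of each block as the maximizer. The only differences are refinements: you verify the per-layer maximizer directly via von Neumann's trace inequality where the paper cites prior work, and you flag that the oracle is non-unique when a block is rank-deficient.
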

\begin{proof}
We consider the Frank-Wolfe method for finding an approximate solution. For shortness, let $\mH \coloneqq \nabla^2  \cL(\mW_t)$, and  note that the objective $F(\mD) \coloneqq  \dotprod{\mD, \mH[\mD]}$ is a quadratic form, whose gradient is given by
\[ \nabla F(\mD) \; = \; 2   \mH[\mD].\]
To compute a step of the Frank-Wolfe method, we need to solve
\begin{align*}
\argmax{\mD} \dotprod{\nabla F(\mD_k), \mD }
    & \qquad \mbox{subject to } \| \mD^{\ell}\|_{2}  \leq 1, \quad \mbox{for }\ell =1,\ldots, L.
\end{align*}
Clearly, this problem is separable over layers and is thus equivalent to solving \citep{bernstein2024old}
\begin{align*}
\mU^{\ell}_k \mV^{\ell}_k =  \argmax{\mD^{\ell}} \dotprod{\nabla_{\mW^\ell} F(\mD_k), \mD^{\ell} }
    & \qquad \mbox{subject to } \| \mD^{\ell}\|_{2}  \leq 1,
\end{align*}
where $\nabla_{\mW^\ell} F(\mD_k)$ is the directional derivative of the gradient of the $\ell$-th layer given by
\[ 
    \nabla_{\mW^\ell} F(\mD_k) =  \frac{d}{d\epsilon}\left.  \nabla_{\mW^{\ell}}  \cL(\mD^{1}_k, \ldots,\mD^{\ell}_k  +\epsilon \mD^{\ell}, \ldots, \mD^L_k ) \right|_{\epsilon =0} 
\]
and where $\mU^{\ell}_k \mS^{\ell}_k \mV^{\ell}_k  = \nabla_{\mW^\ell} F(\mD_k)$. The matrix $\mU^{\ell}_k \mV^{\ell}_k $ is also known as the polar factor of  $\nabla_{\mW^\ell} F(\mD_k)$.  The resulting Frank-Wolfe method is thus given by

\begin{equation*}
 \mU^{\ell}_k \mV^{\ell}_k = \mbox{polar}(\nabla_{\mW^\ell} F(\mD_k)), \quad
    \mD_{k+1}^{\ell} = (1-\gamma_k) \mD_{k}^{\ell} + \gamma_k \mU^{\ell}_k \mV^{\ell}_k,
\end{equation*}
where $\gamma_k = \frac{2}{k+2}.$
\end{proof}

\subsection{Missing Proofs for the Block $\ell_{1,2}$ Norm}

\begin{lemma}\label{lem:block_gd}
        The solution to the problem 
        \[
        \Delta\bw_* = \argmin{\bw}\langle\Delta\bw,\bg_t \rangle + \frac{1}{2\eta}\|\Delta\bw\|^2_{1,2}
        \]
        can be written as 
        \[
        \Delta\bw^\ell_* = 
        \begin{cases}
            0 & \text{if } \bg_t = 0,\\
            0 & \text{if } \bg_t \neq 0 \text{ and } \ell \notin J,\\
            -\frac{\eta}{|J|}\bg^\ell_t & \ell\in J,
        \end{cases}
        \]
        where $J \coloneqq \{\ell\in[L]\mid \|\bg_t^\ell\|_2 = \max_{j\in[L]} \|\bg_t^j\|_2\}.$
    \end{lemma}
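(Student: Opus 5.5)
The plan is to reuse the scaling argument from Lemma~\ref{lem:dualpdatesum}: show the objective's minimum value equals $-\frac{\eta}{2}M^2$, with $M \coloneqq \max_{j\in[L]}\|\bg_t^j\|_2$, and then check that the proposed $\Delta\bw_*$ attains it. The case $\bg_t=0$ is immediate. The objective is then $\frac{1}{2\eta}\|\Delta\bw\|_{1,2}^2\ge 0$, so $\Delta\bw=0$ is optimal.

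\textbf{Step 1 (dual norm).} Assume $\bg_t\neq 0$, so $M>0$. For any $\bx$, by Cauchy--Schwarz blockwise,
\[
\langle \bx,\bg_t\rangle=\sum_{\ell=1}^L\langle \bx^\ell,\bg_t^\ell\rangle\ \ge\ -\sum_{\ell=1}^L\|\bx^\ell\|_2\|\bg_t^\ell\|_2\ \ge\ -M\|\bx\|_{1,2}.
\]
This says the dual of $\|\cdot\|_{1,2}$ is the blockwise $\ell_{\infty,2}$ norm, and $\min_{\|\bx\|_{1,2}=1}\langle\bx,\bg_t\rangle\ge -M$.

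\textbf{Step 2 (lower bound).} Write any $\Delta\bw$ as $c\bx$ with $c\ge 0$ and $\|\bx\|_{1,2}=1$. Step 1 then gives
\[
\langle\Delta\bw,\bg_t\rangle+\tfrac{1}{2\eta}\|\Delta\bw\|_{1,2}^2 \ \ge\ -cM+\tfrac{c^2}{2\eta}.
\]
Minimizing over $c\ge 0$, the minimum is at $c=\eta M$, so the objective is at least $-\frac{\eta}{2}M^2$.

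\textbf{Step 3 (attainment).} Plug in the candidate $\Delta\bw^\ell_*=-\frac{\eta}{|J|}\bg_t^\ell$ for $\ell\in J$ and $0$ otherwise. Since $\|\bg_t^\ell\|_2=M$ for every $\ell\in J$,
\[
\|\Delta\bw_*\|_{1,2}=\sum_{\ell\in J}\tfrac{\eta}{|J|}M=\eta M,
\qquad
\langle\Delta\bw_*,\bg_t\rangle=-\sum_{\ell\in J}\tfrac{\eta}{|J|}M^2=-\eta M^2.
\]
The objective therefore equals $-\eta M^2+\frac{\eta M^2}{2}=-\frac{\eta}{2}M^2$, which matches the lower bound, so $\Delta\bw_*$ is a minimizer.

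Nothing in this argument is a real obstacle; the only subtle point is the wording. When $|J|>1$ the minimizer is not unique: any convex weighting of the blocks in $J$ also attains equality in both inequalities of Step 1. So ``can be written as'' should be read as saying that the symmetric choice above is one valid minimizer, i.e.\ a valid dual-gradient selection. When $|J|=1$ it recovers exactly the \algname{Block CD} update stated in the main text.
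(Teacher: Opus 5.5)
Your proof is correct, but it follows a different route from the paper's. The paper uses first-order optimality. With $\chi \coloneqq \|\Delta\bw_*\|_{1,2}$, it writes $0\in\bg_t+\frac{\chi}{\eta}\,\partial\bigl(\sum_{\ell}\|\Delta\bw_*^\ell\|_2\bigr)$ and reads off blockwise that $\|\bg_t^\ell\|_2\le\chi/\eta$, with equality wherever $\Delta\bw_*^\ell\neq0$. From this it deduces that only blocks in $J$ move and that $\chi=\eta M$, and then solves for $\Delta\bw_*^\ell$. That approach derives the form of the minimizer rather than guessing a candidate and certifying it.

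When $|J|>1$, however, the paper's step $\sum_{\ell\in J}\|\Delta\bw_*^\ell\|_2=|J|\max_{\ell\in J}\|\Delta\bw_*^\ell\|_2$ silently assumes the mass is split equally across tied blocks. The optimality conditions do not force this; it is exactly the non-uniqueness you point out. So the paper's argument really only singles out the symmetric representative.

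Your argument (dual-norm lower bound $-\frac{\eta}{2}M^2$ plus attainment) has several advantages:
\begin{itemize}
\item It needs no subgradient calculus and is airtight as a proof that the displayed $\Delta\bw_*$ is \emph{a} minimizer.
\item Its equality cases in Steps~1--2 give the full solution set: $\Delta\bw^\ell=-\eta\alpha_\ell\bg_t^\ell$ on $J$ with $\alpha_\ell\ge0$ and $\sum_{\ell\in J}\alpha_\ell=1$, and $\Delta\bw^\ell=0$ off $J$.
\item It is simply the general formula $\Delta\bw_*=-\eta\|\bg_t\|_*(\bg_t)_*$ of Definition~\ref{def:gradmethod}, specialized to the dual norm $\|\cdot\|_{\infty,2}$.
\end{itemize}
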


    \begin{remark}
        In the case when $J$ is a singleton, we obtain \algname{Block CD} 
        \[
        \bw_{t+1}^{\ell} =
        \begin{cases}
          \bw_t^{\ell} - \eta\bg^\ell_t & \mbox{ if }\ell =   \ell_{\max}, \\
           \bw_t^{\ell} & \mbox{ otherwise, }
        \end{cases}
        \]
        where $\ell_{\max} = \argmax{\ell\in[L]}\|\bg_t^\ell\|_2.$
    \end{remark}

    \begin{remark}
        In the case when $L=d$, we obtain vanilla coordinate descent (\algname{CD})
        \[
        \bw_{t+1}^j = \begin{cases} \bw_t^{j_{\max}} - \eta\bg^{j_{\max}}_t & \mbox{ if } j=j_{\max}\\
        \bw_t^j & \mbox{ otherwise,}
        \end{cases}
        \]
        where $j_{\max} = \argmax{j\in[d]}|\bg^j_t|$.
    \end{remark}
    \begin{proof}
        We need to find a solution to the problem 
        \[
        \min_{\Delta\bw} \langle\Delta\bw, \bg_t\rangle + \frac{1}{2\eta}\left(\sum_{\ell=1}^L\|\Delta\bw^\ell\|_2\right)^2 =  \min_{\Delta\bw} \sum_{\ell=1}^L\langle\Delta\bw^\ell, \bg_t^\ell \rangle + \frac{1}{2\eta}\left(\sum_{\ell=1}^L\|\Delta\bw^\ell\|_2\right)^2
        \]
        Let $\Delta\bw_*$ be the solution to the problem. Therefore, 
        \begin{align}\label{eq:necessary_cond}
        0 &\in \bg_t+ \frac{1}{\eta}\left(\sum_{\ell=1}^L\|\Delta\bw^\ell_*\|_2\right)\partial\left(\sum_{\ell=1}^L\|\Delta\bw_*^\ell\|_2\right)\notag\\
        &= \bg_t+ \frac{1}{\eta}\left(\sum_{\ell=1}^L\|\Delta\bw^\ell_*\|_2\right)(\partial \|\Delta\bw^1_*\|_2^\top, \ldots, \partial\|\Delta\bw^L_*\|_2^\top )^\top.
        \end{align}
        Let $\chi = \sum_{\ell=1}^L\|\Delta\bw_*^\ell\|_2$. Note that 
        \[
            \partial\|\bx\| = \begin{cases}
            \frac{\bx}{\|\bx\|_2} & \text{if } \bx \neq 0,\\
            \{\by \mid \|\by\|_2 \le 1\} & \text{otherwise}
        \end{cases}.
        \]
        Therefore, we should satisfy the following $L$ equalities
        \begin{align}\label{eq:sfqmfkq}
        -\bg^\ell_t = \frac{\chi}{\eta}\partial\|\Delta\bw^\ell_*\|_2, \quad \text{and} \quad \|\bg^\ell_t\|_2 = \frac{\chi}{\eta}\left\|\partial\|\Delta\bw^\ell_*\|_2\right\| \le \frac{\chi}{\eta}.
        \end{align}
        This implies that each block of $\bg_t$ has a norm at most $\chi/\eta,$ and whenever some block $\ell$ satisfies $\partial\|\Delta\bw^\ell_*\|_2 = \frac{\Delta\bw^\ell_*}{\|\Delta\bw^\ell_*\|_2}$, then the corresponding block $\|\bg^\ell_t\|_2=\frac{\chi}{\eta}.$
        
        If $\|\bg^\ell_t\|_2 = 0$ for all $\ell\in[L],$ i.e., $\bg_t= 0$, then for all $\Delta\bw^\ell_* = 0.$ 
        
        Now let us assume that there is at least one block $\ell\in[L]$ such that $\|\bg^\ell_t\|_2 \neq 0.$ Let $J\coloneqq \{\ell \in [L] \mid \|\bg^\ell_t\|_2 = \max_{j \in[L]} \|\bg^j_t\|_2\} \neq \emptyset$. Then, for all blocks $\ell\in J$ we have $\|\bg^\ell_t\|_2 = \frac{\chi}{\eta}.$ Indeed, if it is not the case, i.e., if for all $\ell\in[L]$ we have $\|\bg^\ell_t\|_2 < \frac{\chi}{\eta},$ then $\Delta\bw^* = 0$ and we obtain a contradiction to \eqref{eq:necessary_cond} since $\bg_t\neq 0$.

        We summarize that for any block $\ell\notin J$ such that $\|\bg^\ell\|_2 < \frac{\chi}{\eta}$ we obtain $\Delta\bw_*^\ell=0.$ In the opposite case for $\ell\in J$, we have that 
        \[\|\bg_t^\ell\|_2 = \max_{j\in[L]}\|\bg^j_t\|_2 = \frac{\chi}{\eta} \Rightarrow \chi = \sum_{\ell \in J} \|\Delta\bw_*^\ell\|_2 = |J|\max_{\ell\in J}\|\Delta\bw_*^\ell\| =  \eta\max_{\ell\in[L]}\|\bg^\ell_t\|,
        \]
        and from \eqref{eq:sfqmfkq} we obtain $\Delta\bw^\ell_* = -\frac{\eta\max_{j \in [L]}\|\bg_t^j\|_2}{|J|} \frac{\bg^\ell_t}{\|\bg^\ell_t\|_2} = -\frac{\eta}{|J|}\bg^{\ell}_t$ for $\ell \in J.$ This concludes the proof.
    \end{proof}

\begin{lemma}\label{lem:sharpness_l12_norm}
    Let $\norm{\cdot}$ be the block $\ell_{1,2}$ norm. Assume that the Hessian $\nabla^2\cL(\bw_t)$ is positive semi-definite. Then the generalized sharpness \eqref{eq:sharpness-12norm} is given by
    \[
        S^{\norm{\cdot}_{1,2}}(\bw_t) = \max_{\ell\in[L]}\lambda_{\max}(\nabla^2_{\bw^{\ell}}(\bw_t)).
    \]
\end{lemma}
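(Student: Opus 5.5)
Since $\mH\coloneqq\nabla^2\cL(\bw_t)\succeq 0$, the quadratic form $q(\bd)=\bd^\top\mH\bd$ is convex. I would therefore replace the constraint $\|\bd\|_{1,2}=1$ by $\|\bd\|_{1,2}\le 1$ and then use that a convex function attains its maximum over a compact convex set at an extreme point. Write $\mH_{\ell}\coloneqq\nabla^2_{\bw^\ell}\cL(\bw_t)$ for the $\ell$-th diagonal block and $\lambda^\star\coloneqq\max_\ell\lambda_{\max}(\mH_\ell)$. The goal is two inequalities, $S^{\norm{\cdot}_{1,2}}\ge\lambda^\star$ and $S^{\norm{\cdot}_{1,2}}\le\lambda^\star$.

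\textbf{Lower bound.} Pick $\ell^\star$ attaining $\lambda^\star$ and a unit eigenvector $\bv\in\R^{d_{\ell^\star}}$ of $\mH_{\ell^\star}$. Let $\bd$ equal $\bv$ in block $\ell^\star$ and zero elsewhere. Then $\|\bd\|_{1,2}=\|\bv\|_2=1$ and $\bd^\top\mH\bd=\bv^\top\mH_{\ell^\star}\bv=\lambda^\star$. This is a feasible point of \eqref{eq:sharpness-12norm}, so $S\ge\lambda^\star$.

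\textbf{Upper bound.} Take any $\bd$ with $\sum_\ell\|\bd^\ell\|_2=1$. Put $\alpha_\ell\coloneqq\|\bd^\ell\|_2$, and set $\bu^\ell\coloneqq\bd^\ell/\alpha_\ell$ when $\alpha_\ell>0$ (otherwise $\bu^\ell$ is any unit vector). Let $\be_\ell(\bu^\ell)$ denote the embedding of $\bu^\ell$ into block $\ell$ with zeros elsewhere. Then
\[
\bd=\sum_\ell\alpha_\ell\,\be_\ell(\bu^\ell),
\]
which is a convex combination since $\alpha_\ell\ge0$ and $\sum_\ell\alpha_\ell=1$. PSD-ness makes $q$ convex, so Jensen's inequality gives
\[
q(\bd)\le\sum_\ell\alpha_\ell\,q(\be_\ell(\bu^\ell))=\sum_\ell\alpha_\ell\,(\bu^\ell)^\top\mH_\ell\bu^\ell\le\sum_\ell\alpha_\ell\lambda_{\max}(\mH_\ell)\le\lambda^\star.
\]
The middle inequality is the Rayleigh quotient bound applied to each block.

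\textbf{Main obstacle.} The only substantive step is the convexity and Jensen argument, which is exactly where positive semidefiniteness is used. Without it, $q$ can be larger at a point mixing several blocks, because the off-diagonal blocks then contribute positive cross terms. This is consistent with the paper's remark that the general case is NP-hard. As a sanity check, I would verify convexity directly: $q$ is a PSD quadratic form, so $q\big(\sum_\ell\alpha_\ell\bx_\ell\big)\le\sum_\ell\alpha_\ell q(\bx_\ell)$ follows from the convexity of $\bx\mapsto\|\mH^{1/2}\bx\|_2^2$. Combining the two bounds gives $S^{\norm{\cdot}_{1,2}}(\bw_t)=\max_\ell\lambda_{\max}(\nabla^2_{\bw^\ell}\cL(\bw_t))$.
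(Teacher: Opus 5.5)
Your proposal is correct and follows the paper's proof: PSD-ness makes $\bd^\top\mH\bd$ convex, a point with $\|\bd\|_{1,2}=1$ is written as a convex combination of single-block unit vectors, and Jensen reduces the bound to the per-block Rayleigh quotients. Your explicit lower bound (embedding a top eigenvector of the maximizing block) and your handling of zero blocks simply spell out details the paper leaves implicit.
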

\begin{proof}
If $\mH = \nabla^2\cL(\bw_t)$ is positive semidefinite, then the function $f(\bd) = \dotprod{\bd, \mH\bd}$ is convex. Our goal is to find the maximum of this quadratic convex function over a $\ell_{1,2}$-norm unit ball. It attains the maximum at the border, i.e., $\|\bd\|_{1,2}=1.$ Any point  $\by$ at the border of the $\ell_{1,2}$ unit norm can be expressed as 
\[
    \by = (\alpha_1\bd^1,\ldots,\alpha_L\bd^L) \quad \text{where} \quad \|\bd^\ell\|_2 = 1~\forall \ell\in[L]\quad \text{and} \quad \sum_{\ell=1}^L\alpha_\ell = 1.
\]
Let $\by_1 = (\bd^1, 0, \ldots, 0), \by_2 = (0, \bd^2, \ldots, 0), \ldots, \by_L = (0, 0, \ldots, \bd^L),$ $\|\bd^{\ell}\|_2 = 1$ for all $\ell\in[L].$ Then $\by = \sum_{\ell=1}^L \alpha_\ell\by_\ell.$ Since $f$ is convex, then $f(\by) \le \sum_{\ell=1}^L \alpha_\ell f(\by_\ell) \le \max_{\ell\in[L]}f(\by_\ell).$ Therefore, our problem reduces to 
\begin{align}\label{eq:max_block_eigen}
&\max_{\ell\in[L]}\max_{\|\bd^\ell\|_2=1} \dotprod{\bd^\ell, \nabla^2_{\bw^\ell}\cL(\bw_t)\bd^{\ell}} = \max_{\ell\in[L]}\lambda_{\max}(\nabla^2_{\bw^\ell}\cL(\bw_t)),
\end{align}
where $\nabla^2_{\bw^\ell}\cL(\bw_t)$ is the $\ell$-th diagonal block of the Hessian. In the special case of $L=d$, we have the sharpness measure

\[ \max_{\bd} \frac{\bd^\top \nabla^2 \cL(\bw_t) \bd}{\norm{\bd}_{1}^2} =\max_{j}|\nabla^2 \cL(\bw_t)_{jj}|.\]
\end{proof}

\begin{lemma}\label{lem:FW_l12_norm}
    Let $\norm{\cdot}$ be the block $\ell_{1,2}$ norm. Then the iterates of the \algname{FW} to approximate \eqref{eq:sharpness-12norm} are given by
    \[
    \bv_k = \frac{(\nabla^2\cL(\bw_t)\bd_k)_{\ell}}{\|(\nabla^2\cL(\bw_t)\bd_k)_{\ell}\|_2}, \quad \bd_{k+1}= (1-\gamma_k)\bd_k + \gamma_k \bv_k,
    \]
    where $(\nabla^2\cL(\bw_t)\bd_k)_{\ell}$ is the $\ell$-th block of the vector $\nabla^2\cL(\bw_t)\bd_k$, and $\gamma_k = \frac{2}{2+k}.$
\end{lemma}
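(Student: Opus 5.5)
The claim concerns one step of the Frank--Wolfe method (Algorithm~\ref{alg:FrankWolfe}) applied to $F(\bd)=\bd^\top\mH\bd$ with $\mH\coloneqq\nabla^2\cL(\bw_t)$ over the unit $\ell_{1,2}$ ball. We need to show that the linear minimization oracle step takes the stated form. The plan mirrors the proof of Lemma~\ref{lem:FW_spectral}. First, compute $\nabla F(\bd_k)=2\mH\bd_k$. Since the oracle maximizes a linear function, the positive factor $2$ is irrelevant, so the oracle problem is
\[
\bv_k\in\argmax_{\|\bv\|_{1,2}\le 1}\dotprod{\mH\bd_k,\bv}.
\]
Writing $\bg\coloneqq \mH\bd_k$ with blocks $\bg^\ell=(\mH\bd_k)_\ell$, the value of this problem is the dual norm $\|\bg\|_{1,2}^*$. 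We will identify that dual norm as $\|\bg\|_{\infty,2}=\max_\ell\|\bg^\ell\|_2$, together with its maximizer.

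For the upper bound, apply Cauchy--Schwarz blockwise and then H\"older across blocks:
\[
\dotprod{\bg,\bv}=\sum_{\ell}\dotprod{\bg^\ell,\bv^\ell}\le\sum_\ell\|\bg^\ell\|_2\|\bv^\ell\|_2\le \Big(\max_\ell\|\bg^\ell\|_2\Big)\|\bv\|_{1,2}\le\max_\ell\|\bg^\ell\|_2 .
\]
For attainment, choose $\ell\in\argmax_j\|\bg^j\|_2$. Let $\bv_k$ be zero outside block $\ell$ and equal to $\bg^\ell/\|\bg^\ell\|_2$ on block $\ell$. Then $\|\bv_k\|_{1,2}=1$ and $\dotprod{\bg,\bv_k}=\|\bg^\ell\|_2$, so every inequality above is tight. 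This $\bv_k$ is exactly the vertex stated in the lemma, read as the vector supported on the $\ell$-th block. The update $\bd_{k+1}=(1-\gamma_k)\bd_k+\gamma_k\bv_k$ with $\gamma_k=2/(2+k)$ is then just the Frank--Wolfe averaging step copied from Algorithm~\ref{alg:FrankWolfe}.

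The only delicate points are bookkeeping, not real obstacles. The statement must be read with $\ell$ chosen as a maximizing block, consistent with the $\ell_{\max}$ used in Lemma~\ref{lem:block_gd}; when several blocks tie, any of them (or any convex combination) is a valid oracle output. In the degenerate case $\mH\bd_k=0$, any feasible $\bv$ is optimal, and we can take $\bv_k=0$ or leave the iterate unchanged.
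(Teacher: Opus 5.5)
Your proof is correct and follows the paper's argument essentially line for line. Both compute $\nabla F(\bd_k)=2\mH\bd_k$, bound the linear-oracle objective by blockwise Cauchy--Schwarz and then by $\max_\ell\|\cdot\|_2$ against $\|\bv\|_{1,2}\le 1$, and attain the bound by putting all the mass on a maximizing block. Your remarks on ties and on the degenerate case $\mH\bd_k=0$, where the stated formula divides by zero, are sensible additions that the paper leaves implicit.
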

\begin{proof}
    We consider the Frank-Wolfe method for finding an approximate solution. For shortness, let $\mH \coloneqq \nabla^2\cL(\bw_t),$ and note that the objective $F(\bd) \coloneqq\bd^\top\mH\bd$ is a quadratic form, whose gradient is given by $\nabla F(\bd) = 2\mH\bd.$ To compute a step of the Frank-Wolfe method, we need to solve 
    \[
    \argmax{\bd}\langle\nabla F(\bd_k), \bd\rangle \quad \mbox{subject to } \|\bd\|_{1,2} \le 1.
    \]
    The solution to this is given by the dual norm and the dual gradient
    \[
    \max_{\|\bd\|_{1,2}\le 1} \langle\nabla F(\bd_k), \bd\rangle = \|\nabla F(\bd_k)\|_{\infty,2} = \max_{\ell\in[L]}\|\nabla_{\bd^{\ell}}F(\bd_k)\|_2.
    \]
    This is true, since 
    \begin{align}
    \langle\nabla F(\bd_k), \bd \rangle &= \sum_{\ell=1}^L \langle\nabla_{\bd^\ell} F(\bd_k), \bd^{\ell} \rangle \le \sum_{\ell=1}^L\|\nabla_{\bd^\ell} F(\bd_k)\|_2\cdot\| \bd^{\ell}\|_2\nonumber\\
    &\le \max_{\ell\in[L]}\|\nabla_{\bd^\ell} F(\bd_k)\|_2\cdot\sum_{\ell=1}^L\|\bd^{\ell}\|_2 = \max_{\ell\in[L]}\|\nabla_{\bd^\ell} F(\bd_k)\|_2.
    \end{align}
    The maximizer is obtained by concentrating all mass on any group $\ell \in \{\ell : \|\nabla_{\bd^{\ell}}F(\bd_k)\|_2 = \max_{i\in[L]}\|\nabla_{\bd^i}F(\bd_k)\|_2\},$ namely,
    \[
    \bd^{\ell}_* = \begin{cases}
        \frac{\nabla_{\bd^{\ell}}F(\bd_k)}{\|\nabla_{\bd^{\ell}}F(\bd_k)\|_2}, &\ell \in \{j : \|\nabla_{\bd^{j}}F(\bd_k)\|_2 = \max_{i\in[L]}\|\nabla_{\bd^i}F(\bd_k)\|_2\}\\
        0, &\mbox{otherwise.}
    \end{cases}
    \]
    
\end{proof}

\section{Non-Euclidean Gradient Descent on Quadratics}\label{sec:quadratics_proof}
To prove convergence of Non-Euclidean GD for the case of a sufficiently small step size
(Theorem \ref{th:quadratics1}), we follow standard arguments of smoothness and strong
convexity. The following definitions of smoothness and strong convexity are standard
generalizations from the Euclidean norm to an arbitrary norm.

\begin{definition}
We say that $\cL: \mathbb{R}^d \rightarrow \mathbb{R}$ is $(L, \norm{\cdot})$-smooth if
\begin{equation}
    \|\nabla \cL(\bw) - \nabla \cL(\bv)\|_* \leq L \|\bw - \bv\|
\end{equation}
for all $\bw, \bv \in \mathbb{R}^d$.
\end{definition}

\begin{definition}
We say that $\cL: \mathbb{R}^d \rightarrow \mathbb{R}$ is $(\mu, \norm{\cdot})$-strongly
convex if
\begin{equation} \label{eq:strongly_convex}
    \cL(\bv) \geq \cL(\bw) + \langle \nabla \cL(\bw), \bv - \bw \rangle + \frac{\mu}{2} \|\bv - \bw\|^2
\end{equation}
for all $\bw, \bv \in \mathbb{R}^d$.
\end{definition}

The following lemmas show that our quadratic $\cL(\bw) = \frac{1}{2} \bw^\top \mH \bw$ is
smooth and strongly convex.

\begin{lemma} \label{lem:quad_smooth}
The objective $\cL(\bw) = \frac{1}{2} \bw^T \mH \bw$ is $(L, \norm{\cdot})$-smooth with $L =
\sup_{\|\bz\| = 1} \bz^T \mH \bz$.
\end{lemma}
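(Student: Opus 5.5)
We need to show $\|\mH(\bw-\bv)\|_* \le L\|\bw-\bv\|$ with $L=\sup_{\|\bz\|=1}\bz^\top\mH\bz$. Since $\nabla\cL(\bw)-\nabla\cL(\bv)=\mH(\bw-\bv)$ and everything is homogeneous, it suffices to prove the operator-norm identity
\[
\|\mH\bz\|_* \le L\|\bz\| \quad \text{for all } \bz .
\]
By the definition of the dual norm, this is equivalent to showing $\bu^\top\mH\bz \le L$ for all unit vectors $\bu,\bz$ with $\|\bu\|=\|\bz\|=1$.

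The plan is to use the Cauchy--Schwarz inequality for the semi-inner product $(\bx,\by)\mapsto \bx^\top\mH\by$. In Theorems~\ref{th:quadratics1}--\ref{th:quadratics2} we have $\mH\succ 0$, and I will assume $\mH\succeq 0$, which is all that is needed. This gives
\[
\bu^\top\mH\bz \le \sqrt{\bu^\top\mH\bu}\,\sqrt{\bz^\top\mH\bz} \le \sqrt{L}\sqrt{L}=L,
\]
since each quadratic form on a unit vector is at most $L$ by the definition of $L$. Taking the maximum over $\|\bu\|=1$ yields $\|\mH\bz\|_*\le L$ for $\|\bz\|=1$, and scaling extends this to all $\bz$.

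An alternative route, which avoids relying on the square-root step and instead uses polarization, is
\[
4\,\bu^\top\mH\bz=(\bu+\bz)^\top\mH(\bu+\bz)-(\bu-\bz)^\top\mH(\bu-\bz).
\]
Bounding the first term by $L\|\bu+\bz\|^2\le 4L$ and the second below by $0$ gives only $\bu^\top\mH\bz\le L$ after using positive semidefiniteness. This works, but the Cauchy--Schwarz argument is cleaner.

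The main subtlety is the need for positive semidefiniteness. For indefinite $\mH$, the quantity $\sup_{\|\bz\|=1}\bz^\top\mH\bz$ does not control the dual operator norm; the negative curvature would have to be controlled as well. I will therefore state explicitly that $\mH\succeq 0$, as in the setting of the theorems, and note that finiteness of $L$ follows from compactness of the unit sphere in finite dimensions.
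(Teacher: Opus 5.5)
Your proof is correct and follows essentially the paper's route. Both arguments reduce smoothness to showing $\sup_{\|\bu\|=\|\bz\|=1}\bu^\top\mH\bz\le L$; the paper gets this from $2\bu^\top\mH\bz\le \bu^\top\mH\bu+\bz^\top\mH\bz$ (expanding $(\bu-\bz)^\top\mH(\bu-\bz)\ge 0$), which is the arithmetic-mean counterpart of your Cauchy--Schwarz bound, and your explicit assumption $\mH\succeq 0$ is appropriate since the paper's proof also relies on positive semidefiniteness even though the lemma statement omits it.
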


\begin{proof}
For any $\bw, \bv \in \mathbb{R}^d$, denote $\bd = (\bw - \bv)/\|\bw - \bv\|$. Then
\begin{align} \label{eq:smooth_inter}
    \frac{\|\nabla \cL(\bw) - \nabla \cL(\bv)\|_*}{\|\bw - \bv\|} = \frac{\|\mH\bw - \mH\bv\|_*}{\|\bw - \bv\|} = \| \mH \bd \|_* = \sup_{\|\bu_1\| = 1} \bu_1^{\top} \mH \bd &\leq \sup_{\|\bu_1\| = \|\bu_2\| = 1} \bu_1^{\top} \mH \bu_2,
\end{align}
where in the third equality we used the definition of dual norm.
Next we will prove that
$$\sup_{\|\bu_1\| = \|\bu_2\| = 1} \bu_1^{\top} \mH \bu_2 = \sup_{\|\bz\| = 1}
\bz^{\top} \mH \bz.$$ The $(\geq)$ direction is immediate since
\begin{equation}
    \sup_{\|\bu_1\| = \|\bu_2\| = 1} \bu_1^{\top} \mH \bu_2 \geq \sup_{\|\bz\| = 1} \bz^{\top} \mH \bz.
\end{equation}
To show the other direction, let
\begin{equation}
    (\bu_1^*, \bu_2^*) \in \argmax{\|\bu_1\| = \|\bu_2\| = 1} \bu_1^{\top} \mH \bu_2,
\end{equation}
and
\begin{equation}
    \bz^* \in \argmax{\|\bz\| = 1} \bz^{\top} \mH \bz.
\end{equation}
Note that these $\argmax{}$ operations make sense, since we are considering the maximum of
continuous functions on compact domains, which always achieve their supremum. Then
\begin{align*}
    (\bu_1^* - \bu_2^*)^{\top} \mH (\bu_1^* - \bu_2^*) &\geq 0 \\
    (\bu_1^*)^{\top} \mH \bu_1^* - 2 (\bu_1^*)^{\top} \mH \bu_2^* + (\bu_2^*)^{\top} \mH \bu_2^* &\geq 0 \\
    (\bu_1^*)^{\top} \mH \bu_1^* + (\bu_2^*)^{\top} \mH \bu_2^* &\geq 2 (\bu_1^*)^{\top} \mH \bu_2^* \\
    2 (\bz^*)^{\top} \mH \bz^* &\geq 2 (\bu_1^*)^{\top} \mH \bu_2^* \\
    (\bz^*)^{\top} \mH \bz^* &\geq (\bu_1^*)^{\top} \mH \bu_2^*,
\end{align*}
where the first inequality uses that $\mH$ is PSD, the second inequality uses that $\mH$
is symmetric, and the fourth inequality uses $(\bu_1^*)^{\top} \mH \bu_1^* \leq
(\bz^*)^{\top} \mH \bz^*$ and $(\bu_2^*)^{\top} \mH \bu_2^* \leq (\bz^*)^{\top} \mH
\bz^*$. This proves the $(\leq)$ direction, and proves the claim. Then Equation
\eqref{eq:smooth_inter} becomes
\begin{equation}
    \frac{\|\nabla \cL(\bw) - \nabla \cL(\bv)\|_*}{\|\bw - \bv\|} \leq \sup_{\|\bz\| = 1} \bz^{\top} \mH \bz,
\end{equation}
or
\begin{equation}
    \|\nabla \cL(\bw) - \nabla \cL(\bv)\|_* \leq \left( \sup_{\|\bz\| = 1} \bz^{\top} \mH \bz \right) \|\bw - \bv\|.
\end{equation}
\end{proof}

\begin{lemma} \label{lem:quad_sc}
The objective $\cL(\bw) = \frac{1}{2} \bw^T \mH \bw$ is $(\mu, \norm{\cdot})$-strongly convex
with $\mu = \inf_{\|\bv\| = 1} \bv^T \mH \bv$.
\end{lemma}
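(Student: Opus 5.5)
To prove Lemma~\ref{lem:quad_sc}, I would verify the strong-convexity inequality~\eqref{eq:strongly_convex} directly, using the fact that for a quadratic the second-order Taylor expansion is exact. Fix $\bw,\bv\in\R^d$. Since $\cL(\bw)=\frac12\bw^\top\mH\bw$ has $\nabla\cL(\bw)=\mH\bw$ and $\mH$ is symmetric, expanding $\cL(\bv)=\frac12(\bw+(\bv-\bw))^\top\mH(\bw+(\bv-\bw))$ gives the identity
\begin{equation*}
\cL(\bv) = \cL(\bw) + \langle \nabla\cL(\bw), \bv-\bw\rangle + \tfrac12 (\bv-\bw)^\top \mH (\bv-\bw).
\end{equation*}
Everything therefore reduces to showing that $(\bv-\bw)^\top\mH(\bv-\bw)\ge \mu\|\bv-\bw\|^2$ with $\mu=\inf_{\|\bz\|=1}\bz^\top\mH\bz$.

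If $\bv=\bw$, both sides vanish and there is nothing to prove. Otherwise, set $\bz=(\bv-\bw)/\|\bv-\bw\|$, which satisfies $\|\bz\|=1$. Then, by homogeneity of the quadratic form and of the norm, $(\bv-\bw)^\top\mH(\bv-\bw)=\|\bv-\bw\|^2\,\bz^\top\mH\bz\ge \mu\|\bv-\bw\|^2$, since $\bz$ is feasible for the infimum. Substituting this into the identity yields~\eqref{eq:strongly_convex}.

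I would also record that the infimum is attained and that $\mu>0$ when $\mH\succ0$. The unit sphere of a norm on $\R^d$ is compact, and $\bz\mapsto\bz^\top\mH\bz$ is continuous and strictly positive on it, so the minimum exists and is positive. This positivity is what the linear-rate argument of Theorem~\ref{th:quadratics1} will need, although the lemma itself holds for any symmetric $\mH$. I do not expect any real obstacle here; the only care needed is to treat the case $\bv=\bw$ separately and to note the compactness of the sphere.
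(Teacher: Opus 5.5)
Your proof is correct and follows essentially the same route as the paper: both use the exact second-order expansion of the quadratic to reduce the inequality to $(\bv-\bw)^\top\mH(\bv-\bw)\ge\mu\|\bv-\bw\|^2$, then normalize $\bv-\bw$ to a unit vector that is feasible for the infimum. Your separate handling of $\bv=\bw$ (the paper divides by $\|\bv-\bw\|$ without comment) and your compactness remark showing $\mu>0$ when $\mH\succ0$ are small but welcome additions.
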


\begin{proof}
The strong convexity property
\begin{equation}
    \cL(\bv) \geq \cL(\bw) + \langle \nabla \cL(\bw), \bv - \bw \rangle + \frac{\mu}{2} \|\bv - \bw\|^2
\end{equation}
for our particular $\cL$ is equivalent to each of the following statements:
\begin{align}
    \frac{1}{2} \bv^{\top} \mH \bv &\geq \frac{1}{2} \bw^{\top} \mH \bw + (\bv - \bw)^{\top} \mH \bw + \frac{\mu}{2} \|\bv - \bw\|^2 \\
    \frac{1}{2} \bv^{\top} \mH \bv - \bv^{\top} \mH \bw + \frac{1}{2} \bw^{\top} \mH \bw &\geq \frac{\mu}{2} \|\bv - \bw\|^2 \\
    (\bv - \bw)^{\top} \mH (\bv - \bw) &\geq \mu \|\bv - \bw\|^2 \\
    \left( \frac{\bv - \bw}{\|\bv - \bw\|} \right)^{\top} \mH \frac{\bv - \bw}{\|\bv - \bw\|} &\geq \mu,
\end{align}
which is satisfied by $\mu = \inf_{\|\bv\| = 1} \bv^T \mH \bv$.
\end{proof}

\thquadraticsone*

\begin{proof}
To show convergence, we prove a generalization of the Polyak-\L ojasiewicz (PL)
property, then follow the standard analysis of gradient descent for smooth and PL
functions.

Lemma \ref{lem:quad_sc} implies that $\cL$ is $\mu$-strongly convex with $\mu =
\inf_{\|\bv\|=1} \bv^\top \mH \bv$. We also know that $\cL(\bw) \geq \cL_* := 0$, and
that this minimum is achieved at $\bw_* = \mathbf{0}$. So we apply
\eqref{eq:strongly_convex} with $\bv = \bw_*$ and any $\bw$:
\begin{align}
    \cL_* &\geq \cL(\bw) + \langle \nabla \cL(\bw), \bw_* - \bw \rangle + \frac{\mu}{2} \|\bw_* - \bw\|^2 \\
    &\geq \inf_{\bv} \left\{ \cL(\bw) + \langle \nabla \cL(\bw), \bv - \bw \rangle + \frac{\mu}{2} \|\bv - \bw\|^2 \right\}.
\end{align}
From \eqref{eq:gradmethod}, we know the $\inf$ above is minimized when $\bv = \bw -
\nicefrac{1}{\mu} \|\nabla \cL(\bw)\|_* (\nabla \cL(\bw))_*$. We also know that
$\cL(\bw) \geq \cL_* := 0$ for all $\bw$. So
\begin{align}
    \cL_* &\geq \cL(\bw) - \frac{1}{\mu} \|\nabla \cL(\bw)\|_* \langle \nabla \cL(\bw), (\nabla \cL(\bw))_* \rangle + \frac{1}{2 \mu} \|\nabla \cL(\bw)\|_*^2 \|(\nabla \cL(\bw))_*\|^2 \\
    &= \cL(\bw) - \frac{1}{\mu} \|\nabla \cL(\bw)\|_*^2 + \frac{1}{2 \mu} \|\nabla \cL(\bw)\|_*^2 \\
    &= \cL(\bw) - \frac{1}{2 \mu} \|\nabla \cL(\bw)\|_*^2,
\end{align}
so
\begin{equation} \label{eq:pl}
    \|\nabla \cL(\bw)\|_*^2 \geq 2 \mu (\cL(\bw) - \cL_*),
\end{equation}
which is the PL property we need.

Lemma \ref{lem:quad_smooth} implies that $\cL$ is $L$-smooth with $L = S$, so
\begin{align}
    \cL(\bw_{t+1}) &\leq \cL(\bw_t) + \langle \nabla \cL(\bw_t), \bw_{t+1} - \bw_t \rangle + \frac{S}{2} \|\bw_{t+1} - \bw_t\|^2 \\
    &\leq \cL(\bw_t) - \eta \|\nabla \cL(\bw_t)\|_* \langle \nabla \cL(\bw_t), (\nabla \cL(\bw_t))_* \rangle + \frac{S \eta^2 \|\nabla \cL(\bw_t)\|_*^2}{2} \|(\nabla \cL(\bw_t))_*\|^2 \\
    &\leq \cL(\bw_t) - \eta \|\nabla \cL(\bw_t)\|_*^2 + \frac{S \eta^2 \|\nabla \cL(\bw_t)\|_*^2}{2} \\
    &\leq \cL(\bw_t) - \eta \left( 1 - \frac{\eta S}{2} \right) \|\nabla \cL(\bw_t)\|_*^2 \\
    &\leq \cL(\bw_t) - 2 \mu \eta \left( 1 - \frac{\eta S}{2} \right) \left( \cL(\bw_t) - \cL_* \right),
\end{align}
where the last line uses the PL property from \eqref{eq:pl} and that $\eta < 2/S$.
Subtracting $\cL_*$ from both sides:
\begin{align}
    \cL(\bw_{t+1}) - \cL_* &\leq \left( 1 - 2 \mu \eta \left( 1 - \frac{\eta S}{2} \right) \right) \left( \cL(\bw_t) - \cL_* \right),
\end{align}
so that for all $t$,
\begin{equation}
    \cL(\bw_t) - \cL_* \leq \left( 1 - 2 \mu \eta \left( 1 - \frac{\eta S}{2} \right) \right)^t (\cL(\bw_0) - \cL_*).
\end{equation}
\end{proof}

The key to showing divergence when $\eta > 2/S$ (Theorem \ref{th:quadratics2}) is the
following lemma.

\leminvariant*

\begin{proof}
Since $\mH$ is symmetric and PSD, we have for any such $\bv$
\begin{align}
    (\bv - \hat{\bw})^{\intercal} \mH (\bv - \hat{\bw}) &\geq 0 \\
    \bv^{\intercal} \mH \bv - 2 \bv^{\intercal} \mH \hat{\bw} + \hat{\bw}^{\intercal} \mH \hat{\bw} &\geq 0 \\
    \bv^{\intercal} \mH \bv + \hat{\bw}^{\intercal} \mH \hat{\bw} &\geq 2 \bv^{\intercal} \mH \hat{\bw} \\
    2 \hat{\bw}^{\intercal} \mH \hat{\bw} &\geq 2 \bv^{\intercal} \mH \hat{\bw} \\
    \hat{\bw}^{\intercal} \mH \hat{\bw} &\geq \bv^{\intercal} \mH \hat{\bw},
\end{align}
where the fourth line uses that $\hat{\bw}^{\intercal} \mH \hat{\bw} \geq
\bv^{\intercal} \mH \bv$. Therefore
\begin{equation}
    \hat{\bw}\in\argmax{\|\bv\| = 1} \bv^{\intercal} \mH \hat{\bw}.
\end{equation}
Thus we may choose the dual gradient so that $(\mH \hat{\bw})_*=\hat{\bw}$.
\end{proof}

\thquadraticstwo*

\begin{proof}
Let $\bw_0 \in \text{span}(\hat{\bd})$ for some $\hat{\bd} \in \argmax{\|\bd\| = 1}
\bd^{\top} \mH \bd$, so $\hat{\bd} = \bw_0 / \|\bw_0\|$. We will show $\bw_t = (1 - \eta
S)^t \bw_0$ by induction on $t$, and  with the property of $\hat{\bd}$ from Lemma
\ref{lem:invariant}, the proof is essentially a direct calculation. The base case $t =0$
holds since $(1 - \eta S)^0 \bw_0 = \bw_0.$ By the induction hypothesis we have that
$\bw_t = \|\bw_0\|(1 - \eta S)^t \hat{\bd}$, and from the definition of gradient
descent,
\begin{align}
    \bw_{t+1}  &= \bw_t - \eta \left\| \mH \bw_t \right\|_* (\mH \bw_t)_* \\
    &= \|\bw_0\| (1 - \eta S)^t \hat{\bd} - \eta \|\bw_0\| |1 - \eta S|^t \left\| \mH \hat{\bd} \right\|_* (\|\bw_0\| (1 - \eta S)^t \mH \hat{\bd})_* \\
    &= \|\bw_0\| (1 - \eta S)^t \hat{\bd} - \eta \|\bw_0\| (1 - \eta S)^t \left\| \mH \hat{\bd} \right\|_* (\mH \hat{\bd})_* \\
    &= \|\bw_0\| (1 - \eta S)^t \hat{\bd} - \eta \|\bw_0\| (1 - \eta S)^t \left\| \mH \hat{\bd} \right\|_* \hat{\bd} \\
    &= \|\bw_0\| (1 - \eta S)^t \left( 1 - \eta \|\mH \hat{\bd}\|_* \right) \hat{\bd} \\
    &= (1 - \eta S)^{t+1} \bw_0.
\end{align}
where the second line uses the inductive hypothesis, the third line uses that the dual
map satisfies $(\lambda v)_* = \text{sign}(\lambda) v_*$ for any $\lambda \in \mathbb{R}$, the
fourth line uses~Lemma~\ref{lem:invariant}, and
the fifth line uses
\begin{equation}
    \|\mH \hat{\bd}\|_* = \sup_{\|\bv\| = 1} \bv^{\intercal} \mH \hat{\bd} = (\mH \hat{\bd})_*^{\intercal} \mH \hat{\bd} = \hat{\bd}^{\intercal} \mH \hat{\bd} = \sup_{\|\bv\| = 1} \bv^{\intercal} \mH \bv = S.
\end{equation}
\end{proof}

As an aside, we can also show that \algname{GD} will diverge for \textit{every}
initialization when $\eta$ is sufficiently large.

\begin{theorem}
Let $\cL(\bw) \coloneqq \frac{1}{2} \bw^\top \mH\bw$ for some $\mH \succ 0$.  For some norm $\| \cdot \|$, define the generalized sharpness $S^{\norm{\cdot}} := \max_{\|\bd\|\le 1} \bd^\top \mH \bd$. Then, if we run non-Euclidean \algname{GD} (Definition \ref{def:gradmethod}) on $\cL$, \algname{GD} will diverge for every initial point $\bw_0\neq0$ and any step-size $\eta > \nicefrac{2}{\mu}$.
\end{theorem}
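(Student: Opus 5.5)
The plan is to track the loss along the trajectory and show that it grows geometrically. Because $\cL$ is an exact quadratic, one step of non-Euclidean \algname{GD} can be expanded with no remainder term. The curvature term can then be bounded from \emph{below} by $\mu$, where $\mu = \inf_{\|\bv\|=1}\bv^\top\mH\bv$ is the strong convexity constant of Lemma~\ref{lem:quad_sc}. This is the same expansion as in the proof of Theorem~\ref{th:quadratics1}, but with the opposite inequality.

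First, I note that $\mu>0$: the unit sphere of $\norm{\cdot}$ is compact and $\mH\succ 0$, so the infimum is attained and is positive. Since $\bw_0\neq 0$, we also have $\cL(\bw_0)>0$.

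Next, write $\bg_t=\nabla\cL(\bw_t)=\mH\bw_t$ and $\bw_{t+1}=\bw_t-\eta\bd_t$, where $\|\bd_t\|=\|\bg_t\|_*$ and $\dotprod{\bg_t,\bd_t}=\|\bg_t\|_*^2$. The exact second-order expansion of the quadratic gives
\[
\cL(\bw_{t+1})=\cL(\bw_t)-\eta\|\bg_t\|_*^2+\frac{\eta^2}{2}\bd_t^\top\mH\bd_t .
\]
By the definition of $\mu$ (equivalently, Lemma~\ref{lem:quad_sc}), $\bd_t^\top\mH\bd_t\ge\mu\|\bd_t\|^2=\mu\|\bg_t\|_*^2$. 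Substituting this bound yields
\[
\cL(\bw_{t+1})-\cL(\bw_t)\ \ge\ \eta\Bigl(\frac{\eta\mu}{2}-1\Bigr)\|\bg_t\|_*^2 .
\]
When $\eta>2/\mu$ the coefficient is positive.

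Then I would invoke the PL inequality \eqref{eq:pl} established in the proof of Theorem~\ref{th:quadratics1}, namely $\|\bg_t\|_*^2\ge 2\mu\,\cL(\bw_t)$ (here $\cL_*=0$). Since the coefficient above is positive, this lower bound can be multiplied through, giving
\[
\cL(\bw_{t+1})\ \ge\ \bigl(1+2\mu\eta(\tfrac{\eta\mu}{2}-1)\bigr)\cL(\bw_t)=(1-\eta\mu)^2\,\cL(\bw_t).
\]
Because $\eta\mu>2$, we have $(1-\eta\mu)^2>1$. Iterating gives $\cL(\bw_t)\ge(1-\eta\mu)^{2t}\cL(\bw_0)\to\infty$. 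Finally, $\cL(\bw_t)\le\frac12\lambda_{\max}(\mH)\|\bw_t\|_2^2$, so the iterates themselves diverge as well.

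There is no real obstacle here. The only step needing care is checking that the PL bound is applied in the correct direction, which holds precisely because the coefficient $\frac{\eta\mu}{2}-1$ is positive under $\eta>2/\mu$. I would also remark that the argument works for any valid dual-gradient selection, since it uses only $\|\bd_t\|=\|\bg_t\|_*$ and $\dotprod{\bg_t,\bd_t}=\|\bg_t\|_*^2$.
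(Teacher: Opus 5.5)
Your proof is correct, but it takes a different route from the paper's.

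The paper tracks the dual gradient norm directly. It writes $\mH\bw_{t+1} = \mH\bw_t - \eta\|\mH\bw_t\|_*\,\mH(\mH\bw_t)_*$, applies the reverse triangle inequality, and lower-bounds $\|\mH\bv\|_* \ge \bv^\top\mH\bv \ge \mu$ for $\|\bv\|=1$. This gives $\|\mH\bw_{t+1}\|_* \ge (\eta\mu-1)\|\mH\bw_t\|_*$.

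You instead track the loss. You use the exact one-step identity, which is just \eqref{eq:quadratic-bound-gd} with directional smoothness $D^{\norm{\cdot}}(\bw_t,\bw_{t+1}) = \bd_t^\top\mH\bd_t/\|\bd_t\|^2 \ge \mu > 2/\eta$. You then combine it with the PL inequality \eqref{eq:pl}. Your PL step is applied in the correct direction, and the algebra $1+2\mu\eta(\tfrac{\eta\mu}{2}-1) = (1-\eta\mu)^2$ checks out.

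The two arguments give the same exponential rate: your factor $(1-\eta\mu)^2$ for $\cL$ is exactly the factor $\eta\mu-1$ for $\sqrt{\cL}$.

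The paper's argument is more self-contained. It uses only the triangle inequality and the duality pairing, with no appeal to strong convexity or PL.

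Your argument has its own advantages. It mirrors the proof of Theorem~\ref{th:quadratics1} with the inequalities reversed, making the symmetry between the thresholds $2/S$ and $2/\mu$ transparent. It ties the result to the paper's directional-smoothness viewpoint: since $D \ge \mu > 2/\eta$ at every step, \eqref{eq:D-less-than-2-eta} forces the loss to increase. Finally, it yields the slightly more informative conclusion that the loss itself increases monotonically and geometrically, for any dual-gradient selection.
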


\begin{proof}
Starting from the definition of gradient descent,
\begin{align}
    \bw_{t+1} &= \bw_t - \eta \| \mH \bw_t \|_* (\mH \bw_t)_* \\
    \mH \bw_{t+1} &= \mH \bw_t - \eta \| \mH \bw_t \|_* \mH (\mH \bw_t)_* \\
    \left\| \mH \bw_{t+1} \right\|_* &= \bigg\| \mH \bw_t - \eta \| \mH \bw_t \|_* \mH (\mH \bw_t)_* \bigg\|_* \\
    \left\| \mH \bw_{t+1} \right\|_* &\geq \eta \| \mH \bw_t \|_* \bigg\| \mH (\mH \bw_t)_* \bigg\|_* - \|\mH \bw_t\|_* \\
    \left\| \mH \bw_{t+1} \right\|_* &\geq \left( \eta \bigg\| \mH (\mH \bw_t)_* \bigg\|_* - 1 \right) \|\mH \bw_t\|_*. \label{eq:tight_div_inter}
\end{align}
We can bound the coefficient of $\eta$ as
\begin{equation} \label{eq:loose_div_inter}
    \bigg\| \mH (\mH \bw_t)_* \bigg\|_* \geq \inf_{\|\bv\| = 1} \|\mH \bv\|_* = \inf_{\|\bv\| = 1} \sup_{\|\bu\| = 1} \bu^{\intercal} \mH \bv \geq \inf_{\|\bv\| = 1} \bv^{\intercal} \mH \bv = \mu,
\end{equation}
so
\begin{align}
    \left\| \mH \bw_{t+1} \right\|_* &\geq \left( \eta \mu - 1 \right) \|\mH \bw_t\|_*,
\end{align}
and therefore
\begin{align}
    \left\| \mH \bw_t \right\|_* &\geq \left( \eta \mu - 1 \right)^t \|\mH \bw_0\|_*.
\end{align}
Since $\eta > 2/\mu \implies \eta \mu - 1 > 1$,  the parameter norm $\|\mH \bw_t\|_*$
increases exponentially, and GD diverges.
\end{proof}

\end{document}